\documentclass[10pt,journal,compsoc]{IEEEtran}

\usepackage{graphicx,amsmath, amssymb,lineno}
\usepackage{subfigure, graphicx}
\usepackage{amsmath,amssymb,amsthm}
\usepackage{algorithmic}
\usepackage{algorithm}
\usepackage{tabularx}
\usepackage{multirow}
\usepackage{makecell}
\newtheorem{lemma}{Definition}
\newtheorem{theorem}{Proposition}
%
\ifCLASSOPTIONcompsoc
  \usepackage[nocompress]{cite}
\else
  \usepackage{cite}
\fi

%
\ifCLASSINFOpdf

\else

\fi

\hyphenation{op-tical net-works semi-conduc-tor}

\begin{document}

\title{Secure Classification With Augmented Features}

\author{Chenping~Hou,~Ling-Li Zeng, Dewen~Hu
\thanks{Chenping Hou is with the College of Science, National University of Defense Technology, Changsha, 410073, Hunan, China. E-mail: hcpnudt@hotmail.com.}
\thanks{Ling-Li Zeng and Dewen Hu is with the College of Mechatronics and Automation, National University of Defense Technology, Changsha, 410073, Hunan, China. Email: lingl.zeng@gmail.com, dwhu@nudt.edu.cn}
}

\markboth{Secure Classification With Augmented Features}%
{Hou \MakeLowercase{\textit{et al.}}: Secure Classification With Augmented Features}

\IEEEtitleabstractindextext{
\begin{abstract}
With the evolution of data collection ways, it is possible to produce abundant data described by multiple feature sets. Previous studies show that including more features does not necessarily bring positive effect. How to prevent the augmented features worsening classification performance is crucial but rarely studied. In this paper, we study this challenging problem by proposing a \emph{secure} classification approach, \emph{whose accuracy is never degenerated when exploiting augmented features}. We propose two ways to achieve the security of our method named as SEcure Classification (SEC). Firstly, to leverage augmented features, we learn various types of classifiers and adapt them by employing a specially designed robust loss. It provides various candidate classifiers to meet the following assumption of security operation. Secondly, we integrate all candidate classifiers by approximately maximizing the performance improvement. Under a mild assumption, the integrated classifier has theoretical security guarantee. Several new optimization methods have been developed to accommodate the problems with proved convergence. Besides evaluating SEC on 16 data sets, we also apply SEC in the application of diagnostic classification of schizophrenia since it has vast application potentiality. Experimental results demonstrate the effectiveness of SEC in both tackling security problem and discriminating schizophrenic patients from healthy controls.
\end{abstract}

\begin{IEEEkeywords}
Augmented Features, Secure, Classification, Multi-view Learning
\end{IEEEkeywords}
}

\maketitle

\IEEEdisplaynontitleabstractindextext

\section{Introduction}

Classification is a traditional and essential research area in many fields, such as machine learning, data mining and image processing. Classical methods, such as KNN, Naive Bayes, Boosting and SVM all assume that the data are represented by a unique feature set \cite{Bishop:2006:PRM}. With the advent of data collection ways, it is possible to collect abundant data described by multiple distinct feature sets \cite{xu2013MVLsurvey, ZhaoXXS17, ChenZSX12, HouNTY17}. For example, images can be characterized by different descriptors, such as SIFT, HOG etc. A piece of news can be described by text, image, hyperlink etc. In medical image analysis, the functional MRI (fMRI) image can be characterized by the image feature descriptors or the region-to-region functional connectivity features.

In the literature, multi-view learning is a standard learning paradigm to manipulate data with distinct descriptions \cite{xu2013MVLsurvey, ZhaoXXS17, ChenZSX12, HouNTY17}. The heterogeneous features are regarded as a particular view in multi-view learning. Rather than requiring that all the examples have been comprehensively described by a single view, it might be better to exploit the connections between multiple views to improve the performance. Up to now, there are a number of multi-view learning algorithms and they have been combined with or applied to various computer vision and intelligent system problems. One can refer to \cite{xu2013MVLsurvey} and \cite{ZhaoXXS17} for a comprehensive review of multi-view learning.

As to multi-view classification, there are many prominent researches. Roughly, they can be categorized into three groups. The first group is Multiple Kernel Learning (MKL) \cite{jmlr/GonenA11}. Its original intention is to avoid the determination of kernels in the classical Support Vector Machine (SVM) classifier. If we learn a kernel in each view, MKL is naturally suitable for multi-view classification. There are a lot of efforts have been conducted on MKL. For example, Lanckriet et al. have formulated MKL as the dual of a SVM problem and solved it by a semi-definite programming \cite{icml/LanckrietCBGJ02}. Bach et al. have proposed a block norm regularized MKL method \cite{nips/BachTJ04}, Zien and Ong have learned a common feature space for all the different classes and extended MKL for multiclass classification (McMKL) \cite{icml/ZienO07}. The second group is based on subspace learning. It tries to learn a subspace across different views for classification. Farquhar et al. have integrated KCCA and SVM into a single optimization formulation and proposed SVM-2K \cite{nips/FarquharHMSS05}. Diethe et al. have extended traditional Fisher Discriminant Analysis (FDA) by deriving a regularized two-view equivalent of FDA and cast it to multiple views FDA (MvFDA) \cite{Diethe2008Multiview}. Kan et al. have proposed Multi-view Discriminant Analysis (MvDA), which seeks for a single discriminant common space for multiple views in a non-pairwise manner by jointly learning multiple view-specific linear transforms \cite{pami/KanSZLC16}. The third group is based on regression. For instance, the Subspace Co-regularized Multi-View (SCMV) learning method \cite{Guo2012Cross} developed by Guo et al. utilizes the regression model in the projected common subspace of all views. Zheng et al. have proposed the Multi-view Low-Rank Regression (MLRR) \cite{Zheng2015}, which combines regression model on each view by adding a low rank constraint across different views.

Although these methods perform excellently in the related fields, they all focus on the utilization or extension of traditional methods for multi-view classification. There are still some fundamental issues should be aware. Before going into the details, we would like to introduce the setting of one experiment for demonstration and the results are shown in Table \ref{table_sigle_double_compare}. We have conducted experiments on data characterized in Table \ref{table_DataDetail}. The data named as AD1add2 means that we take the first views of AD data on hand and the second view of AD is regarded as new coming features, and similarly for AD1add3. Four representative multi-view classification methods from the above mentioned three categories, i.e., MLRR, MvFDA, MvDA and McMKL, have been employed. For comparison, we take the first view of each data as the input of these methods and the classification accuracy is recorded in the line \emph{Single} in Table \ref{table_sigle_double_compare}. Correspondingly, if both views of each data sets have been considered, the results are listed in the line \emph{Double}. We take 30\% points for training and the rest for testing.

\begin{table}[!t]
\caption{Accuracy comparison between single view and double views.}
\label{table_sigle_double_compare}
\centering
\vskip -0.1in
{
\begin{tabular}{c | c || c c c }
\hline
  Methods             &  View No.     & Ionoshpere        & AD1add2          & AD1add3 \\
\hline
\multirow{2}{*}{MLRR} & Single        &  0.7696           &  0.8381          & 0.8381      \\
                      & Double        &  0.9098           &  0.7652          & 0.7234      \\
\hline
\multirow{2}{*}{MvFDA}& Single        &  0.9235           &  0.8658          & 0.8658      \\
                      & Double        &  0.9292           &  0.8349          & 0.9173      \\
\hline
\multirow{2}{*}{MvDA} & Single        &  0.8531           &  0.8545          & 0.8545      \\
                      & Double        &  0.9888           &  0.8714          & 0.8867      \\
\hline
\multirow{2}{*}{McMKL}& Single        &  0.9061           &  0.9015          & 0.9015      \\
                      & Double        &  0.9718           &  0.9597          & 0.9455      \\
\hline
\end{tabular}}
\vskip -0.1in
\end{table}

As seen from above results, traditional methods cannot guarantee that the utilization of new coming features could always improve the performance. For instance, as shown in Table \ref{table_sigle_double_compare}, the performances of MLRR on AD1add2 and AD1add3 degrade with more views. Similarly, MvFDA also performs worse on AD1add2 when the second view is added. One important reason may be that, whether the utilization of new coming features will facilitate the performance is closely related to the type of classifiers. For example, as seen from the results in Table \ref{table_sigle_double_compare}, MvFDA performs better than MvDA on data AD1add3 whereas the relationship is opposite on AD1add2. The same phenomenon can also be observed from the results of MLRR on Ionoshpere and AD1add2 data sets. Thus, how to design a \emph{secure} classifier, which guarantees that the performance does not become worse with more features, no matter which type of classifier is employed, is also worthy of investigation.

In this paper, we consider the question how to address the above major and fundamental issue in multi-view learning. To the best of our knowledge, these questions have not been thoroughly studied. Specifically, there are totally two stages of our approach, i.e., Adaption stage (A-stage) and Integration stage (I-stage). In A-stage, we train different types of classifiers on the data in the first view and adapt all these classifier by employing data in the new view. In I-stage, we derive a final classifier based on adapted classifiers. With weak assumption, it is guaranteed to perform no worse than all the classifiers trained on the first view. In other words, we design a secure classier.

We present a SEC (Secure Classifier) learning framework. In A-stage, without domain knowledge about which type of classifier performs best on data in the first view, we employ several popular classifiers as the baseline classifiers $\{f_1, f_2, \cdots, f_m\}$. When the new view comes, we use new view data to adapt these classifiers to $\{g_1, g_2, \cdots, g_m\}$, by only requiring the classifiers and their predictions on the first view. The robust multi-class capped $\ell_1$ norm loss has been employed for adaption to reduce the adaption bias. The balance between two views is also determined automatically, without additional parameter. In I-stage, SEC proposes to maximize the performance gain of $g \triangleq \ell(g_1, g_2, \cdots, g_m)$ against all the classifiers in $\{f_1, f_2, \cdots, f_m\}$. The resultant new formulation is a quadratic constrained linear problem, which is a well-defined problem and can be solved by several state-of-art optimization tools. We can also show that SEC is provably secure and have already achieved the maximal performance gain approximately with a mild assumption. Experimental results on a broad range of datasets validate that SEC clearly leverages the new coming features.

Our contributions are summarized as follows.

(1) We propose the SEC approach to solve this crucial, but rarely studied problem. As far as we know, this is the first research about the secure classifier learning problem with new coming features.

(2) We propose to tackle this problem in two-stage way. In A-stage, we propose a new classifier adaption approach. In I-stage, we present a new learning paradigm to derive a secure classifier. We develop an efficient algorithm to address the optimization problem and give theoretical analyses about the security.

(3) A byproduct of our approach is that instead of accessing the original data, only the classifiers and their predictions on the first view are needed. Thus, it is a privacy preserving approach as mentioned in \cite{cikm/YeZMJZ15}.

(4) Except for systematical evaluation on 16 data sets from a broad range, we also utilize our method in the diagnostic classification of schizophrenia with MRI, which has vast application potentiality. All the experimental results indicate that our algorithm outperforms other compared algorithms in almost all cases. It provides a new perspective to investigate this problem and some useful guidance for the researchers in this field.

The rest of the paper is organized as follows. Section \ref{sec_TTFC} will formulate this problem and we propose the SEC approach and some theoretical results to guarantee the security. The optimization of SEC is introduced in Section \ref{sec_optimization}. Experimental results on benchmark datasets are displayed in Section \ref{sec_exp}, together with the application to diagnostic classification of schizophrenia with MRI. Finally, we conclude this paper in Section \ref{sec_con}.

\section{The SEC Algorithm}
\label{sec_TTFC}

In our work, we focus on the learning problem in two views. Concretely, we have data with one view on hand already and another view being coming. Our main purpose is utilizing the new-view data to train a secure classifier, whose performance is never statistically worse than that of the classifiers trained on the data on hand. Before going into the details, let us introduce some notations at first.

\subsection{Notations and Definition}

Assume that $\{\mathbf{x}_i^{(1)} \in \mathbb{R}^{d_1} \}_{i=1}^n$ and $\{\mathbf{x}_i^{(2)} \in \mathbb{R}^{d_2} \}_{i=1}^n$ are the training data points from two views respectively. Here $n$ is the number of training points, $d_1$ and $d_2$ are the dimensionality of data in view 1 and view 2. The superscript indicates the view index. Correspondingly, denote $\{\mathbf{x}_i^{(1)} \in \mathbb{R}^{d_1} \}_{i=n+1}^{n+t}$ and $\{\mathbf{x}_i^{(2)} \in \mathbb{R}^{d_2} \}_{i=n+1}^{n+t}$ as the testing points and $t$ is the number of testing points. For simplicity, the data matrix is denoted as
$\mathbf{X}_{tr}^{(1)} = [\mathbf{x}_1^{(1)},\mathbf{x}_2^{(1)},\cdots,\mathbf{x}_n^{(1)}]$,
$\mathbf{X}_{tr}^{(2)} = [\mathbf{x}_1^{(2)},\mathbf{x}_2^{(2)},\cdots,\mathbf{x}_n^{(2)}]$,
$\mathbf{X}_{te}^{(1)} = [\mathbf{x}_{n+1}^{(1)},\mathbf{x}_{n+2}^{(1)},\cdots,\mathbf{x}_{n+t}^{(1)}]$ and
$\mathbf{X}_{te}^{(2)} = [\mathbf{x}_{n+1}^{(2)},\mathbf{x}_{n+2}^{(2)},\cdots,\mathbf{x}_{n+t}^{(2)}]$.
Denote the total number of classes as $c$. We employ the one-vs-rest binary coding scheme to encode the class labels. In other words, the label vector of training data $\mathbf{x}_i^{(1)}$ and $\mathbf{x}_i^{(2)}$  is represented by $\mathbf{y}_i \in \{-1, 1\}^{c\times 1}$, such that $y_i(j)=1$ iff $\mathbf{x}_i^{(1)}$ and $\mathbf{x}_i^{(2)}$ belong to the $j$-th category and $y_i(j)=-1$ otherwise. Then, the label matrix is $\mathbf{Y}_{tr}=[\mathbf{y}_1,\mathbf{y}_2, \cdots,\mathbf{y}_n] \in \{-1,1\}^{c\times n}$.

\begin{lemma}
A classifier $f$ is secure with respect to $\{f_j\}_{j=1}^m$ means that the performance of $f$ is never statistically significantly worse
than the best classification results of $\{f_j\}_{j=1}^m$.
\end{lemma}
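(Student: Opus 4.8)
The plan is to recognize, first of all, what kind of statement this is. Under the environment declarations at the top of the source, \texttt{lemma} is redefined to print as ``Definition,'' so the block above is in fact \emph{Definition 1}: it stipulates the meaning of the predicate ``$f$ is secure with respect to $\{f_j\}_{j=1}^m$.'' A definition introduces terminology by convention; it asserts no proposition that could be true or false, and therefore admits no proof. There is nothing to derive from prior hypotheses and no competing formulation to rule out. Consequently the appropriate task is not to \emph{prove} the statement but to check that it is \textbf{well-posed} and unambiguous, so that the later security results can safely invoke it.

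To confirm well-posedness I would isolate the three quantities the definition implicitly references and verify that each is fixed and meaningful. First, a performance measure is needed --- here, the classification accuracy of a classifier on a common test sample --- so that ``performance'' and ``classification results'' denote the same scalar both for $f$ and for every $f_j$. Second, the phrase ``statistically significantly worse'' presupposes a statistical test together with a significance level $\alpha$; I would make explicit that ``worse'' is adjudicated by this test applied to the measured accuracies, rather than by a raw numerical comparison. Third, ``the best classification results of $\{f_j\}_{j=1}^m$'' should be read as the maximum of the chosen measure over $j=1,\ldots,m$, evaluated on the same test sample as $f$, so that the two sides of the comparison are commensurable.

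The one genuinely subtle point --- the closest analogue to an ``obstacle'' in an ordinary proof --- is that the definition couples a population-level notion (``$f$ is secure'') with a sample-level procedure (a significance test). I would therefore spell out that ``never statistically significantly worse'' is to be read as: under the fixed test at level $\alpha$, the hypothesis that $f$ underperforms the best member of $\{f_j\}_{j=1}^m$ is not rejected. This makes security explicitly relative to the test, the level $\alpha$, and the reference family $\{f_j\}_{j=1}^m$, all of which the statement fixes only implicitly. Once these conventions are pinned down the definition is complete and self-consistent, and there remains no further claim that requires proof.
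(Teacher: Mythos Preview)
Your reading is correct: the environment is relabeled to print as ``Definition,'' so this block is Definition~1 and the paper offers no proof of it whatsoever---it simply states the convention and proceeds. Your recognition that there is nothing to prove matches the paper exactly; the additional well-posedness discussion you give is sensible commentary but goes beyond anything the paper itself supplies.
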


\subsection{SEC Approach}

We investigate the multi-view classifier learning problem from new perspectives, i.e., secure. It is difficult to use traditional approaches to achieve this goal directly. In our paper, we tackle it in the following two-stage way.

~~1. In A-stage, we learn different types of baseline classifiers, i.e., $\{f_1, f_2, \cdots, f_m\}$, on data with the first view and then adapt them to $\{g_1, g_2,\cdot, g_m\}$ by utilizing new-view data. They are various candidate classifiers to meet the assumption of the following security operation in I-stage.

~~2. In I-stage, with security guarantee, we integrate $ \{ g_1, g_2, \cdots, g_m\}$ as $g = \ell(g_1, g_2, \cdots, g_m)$ by approximately maximizing the improvement in performance.

\subsubsection{A-stage}

In applications, different classifiers have different model assumptions. It is difficult to design a unique classifier which always achieves the best performances on all kinds of data sets. To get a secure classifier, we first train different types of classifiers on the data with the first view, i.e. $\mathbf{X}_{tr}^{(1)}$ and $\mathbf{Y}$, and adapt them using the new-view data.

Concretely, assume that the classifiers are denoted as $f_1, f_2, \cdots, f_m$, where $m$ is the number of classifiers. For example, $f_j$ could be classification function of KNN, Regression model, Naive Bayes, Boosting methods, SVM or any kind of classifiers. When the new-view data, i.e., $\mathbf{X}_{tr}^{(2)}$, comes, the most direct way is employing traditional multi-view classification approaches as mentioned above. Nevertheless, it may lead to at least two problems in using this strategy. (1) Most of these methods require that all the data, i.e., $\mathbf{X}_{tr}^{(1)}$, $\mathbf{X}_{tr}^{(2)}$ and $\mathbf{Y}$ are already accessible. In many applications, it may violate the privacy preserving requirement \cite{cikm/YeZMJZ15}. For example, in image retrieval, researchers can extract their own features of images as well as get the retrieval results from other popular search engines, such as Google. They cannot obtain the image features used by Google. (2) Retraining the multi-view classifier is time and storage consuming. In our paper, we will adapt the pre-trained existing classifiers, i.e., $f_1,f_2,\cdots, f_m$, by only using their predictions, without accessing the data $\mathbf{X}_{tr}^{(1)}$.

\begin{figure}[!t]
\centering
\includegraphics[width=0.4\textwidth]{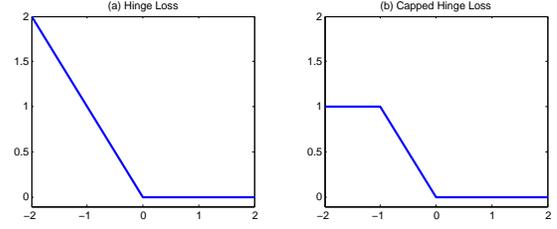}
\caption{The plots of different loss functions. The abscissa is the value of $y_i g_j(\mathbf{x}_i^{(1)},\mathbf{x}_i^{(2)})-1$.}
\label{fig_loss}
\vskip -0.2in
\end{figure}

Assume that the prediction of $f_j$ ($j=1, 2, \cdots, m$) on the first view's training data $\mathbf{x}_i^{(1)}$ ($i=1, 2, \cdots, n$) is $f_j(\mathbf{x}_i^{(1)})$. Due to the privacy preserving requirement or the missing of data in the first view, we can only get $f_j(\mathbf{x}_i^{(1)})$, i.e., the prediction of $f_j$ on $\mathbf{x}_i^{(1)}$. In this case, since the classifier $f_j$ is trained on $\mathbf{X}_{tr}^{(1)}$, $f_j(\mathbf{x}_i^{(1)})$ could approximate $\mathbf{y}_i$. When the new representation, i.e., $\mathbf{x}_i^{(2)}$, comes, we should train a new classifier to approximate the training error between $f_j(\mathbf{x}_i^{(1)})$ and $\mathbf{y}_i$.

Considering that the kernel trick is flexible and SVM is one of the most popular classifiers, we assume that the classifier trained on $\mathbf{x}_i^{(2)}$ is $\mathbf{W}^T \Phi(\mathbf{x}_i^{(2)})+\mathbf{b}$, where $\mathbf{W} \in \mathbb{R}^{D\times c}$, with $D$ the dimensionality of high-dimensional mapping space, is the transformation matrix, $\Phi(\cdot)$ is the mapping function and $\mathbf{b}\in \mathbb{R}^{c\times 1}$ is the bias. Now, the adapted classifier on both $\mathbf{x}_i^{(1)}$ and $\mathbf{x}_i^{(2)}$ is defined as follows
\begin{equation}
\label{eq1}
\small{
\begin{split}
g_j(\mathbf{x}_i^{(1)},\mathbf{x}_i^{(2)}) \triangleq \lambda_1 f_j(\mathbf{x}_i^{(1)})+\lambda_2 (\mathbf{W}^T \Phi(\mathbf{x}_i^{(2)})+\mathbf{b}).
\end{split}}
\end{equation}
where $\lambda_1$ and $\lambda_2$ are two balance parameters, which will be learned automatically.

For the convenience of presentation, we first focus on the problem in two-class scenario. Since our goal is to utilize $\mathbf{x}_i^{(2)}$ to obtain better classifier, the most popular loss metric is the hinge loss. The definition is
\begin{equation}
\label{eq2}
\begin{split}
 &\ell_h(g_j|\mathbf{x}_i^{(1)},\mathbf{x}_i^{(2)}) \\
=&\max(1-y_i g_j(\mathbf{x}_i^{(1)},\mathbf{x}_i^{(2)}), 0) \\
=&\max(1- \lambda_1 y_i f_j(\mathbf{x}_i^{(1)})- \lambda_2 y_i (\mathbf{w}^T \Phi(\mathbf{x}_i^{(2)})+b), 0),
\end{split}
\end{equation}
where $y_i \in \{-1,1\}$ is the class label of both $\mathbf{x}_i^{(1)}$ and $\mathbf{x}_i^{(2)}$, $\mathbf{w}\in \mathbb{R}^D$ is the transformation vector. $f_j(\mathbf{x}_i^{(1)})$ is the decision value and $b$ is the bias.

We would like to explain why we employ the hinge loss in essence. As seen from Eq. (\ref{eq2}) and the left plane in Fig. \ref{fig_loss}, in fact, we use $y_i(\mathbf{w}^T \Phi(\mathbf{x}_i^{(2)})+b)$ to approximate the error $1- \lambda_1 y_i f_j(\mathbf{x}_i^{(1)})$. If this value is small, it means that we can classify the $i$-th training data by merely using $\mathbf{x}_i^{(1)}$. In this scenario, the hinge loss between them is small (less than one) and we should not pay much attention to them. Otherwise, if $\mathbf{x}_i^{(1)}$ is categorized into the wrong class, the hinge loss is large and we need to use $\mathbf{x}_i^{(2)}$ to reduce the approximation error. Briefly, the utilization of hinge loss has roughly categorized $\mathbf{x}_i^{(1)}$ into correctly or wrongly classified points and the hinge loss requires us to emphasize on the wrongly classified points.

Whereas in traditional hinge loss, if the data point is not correctly classified, the loss could be infinite. Since we use $\mathbf{w}^T \Phi(\mathbf{x}_i^{(2)})+b$ to compensate the disagreement between $f_j(\mathbf{x}_i^{(1)})$ and $y_i$. It will mislead the computation of $\mathbf{w}$ and $b$ to fit the largest disagreement. Recall the essence of classification, we should focus on minimizing the classification error, not the classification loss. Thus, we will trunk the larger loss and apply the capped hinge loss shown in the right plane in Fig. \ref{fig_loss} for adaption.
\begin{equation}
\label{eq3}
\begin{split}
 \ell_{ch}(g_j|\mathbf{x}_i^{(1)},\mathbf{x}_i^{(2)})
= \min( \max(1-y_i g_j(\mathbf{x}_i^{(1)},\mathbf{x}_i^{(2)}), 0),1).
\end{split}
\end{equation}

To extend this loss into multi-class scenario, we use the similar strategy as in \cite{XiangNMPZ12, NieWH17}. Concretely, the two-class hinge loss in Eq. (\ref{eq2}) can be reformulated as follows.
\begin{equation}
\label{eq4}
\begin{split}
 \ell_{h}(g_j|\mathbf{x}_i^{(1)},\mathbf{x}_i^{(2)}) =\min_{m_i\geq 0} | g_j(\mathbf{x}_i^{(1)},\mathbf{x}_i^{(2)}) - y_i-y_i m_i|,
\end{split}
\end{equation}
where $m_i \in \mathbb{R}$ is a slack variable to encode the loss.

Inspired by this reformulation, it is natural to extend the binary capped hinge loss in Eq. (\ref{eq3}) into the multi-class scenario as follows.
\begin{equation}
\label{eq5}
\small{
\begin{split}
 & \ell_{mch}(g_j|\mathbf{x}_i^{(1)},\mathbf{x}_i^{(2)})=\\
& \min(\min_{\mathbf{m}_i\geq 0} \| g_j(\mathbf{x}_i^{(1)},\mathbf{x}_i^{(2)}) - \mathbf{y}_i-\mathbf{y}_i\circ \mathbf{m}_i\|_2,1)= \\
& \min(\min_{\mathbf{m}_i\geq 0} \| \lambda_1 f_j(\mathbf{x}_i^{(1)})+\lambda_2 (\mathbf{W}^T \Phi(\mathbf{x}_i^{(2)})+\mathbf{b}) - \mathbf{y}_i-\mathbf{y}_i \circ \mathbf{m}_i\|_2,1),
\end{split}}
\end{equation}
where $\mathbf{m}_i\in \mathbb{R}^{c\times1}$ and $\mathbf{m}_i\geq 0$ means that all the elements of $\mathbf{m}_i$ are non-negative. $\|\cdot\|_2$ means the 2-norm of a vector and $\circ$ is the pairwise product of two vectors.

By summarizing all the training error and adding a regularizer, the objective function of adapting $f_j$ can be formulated as
\begin{equation}
\label{eq6}
\small{
\begin{split}
& \arg \min_{\mathbf{W},\mathbf{b}, \mathbf{M} \geq 0, \lambda_1,\lambda_2} \lambda \|\mathbf{W}\|_{\rm{F}}^{2} + \sum_{i=1}^n \min \\
&\left(\| \lambda_1 f_j(\mathbf{x}_i^{(1)})+\lambda_2 (\mathbf{W}^T \Phi(\mathbf{x}_i^{(2)})+\mathbf{b}) - \mathbf{y}_i-\mathbf{y}_i \circ \mathbf{m}_i\|_2 ,1 \right).
\end{split}}
\end{equation}
Here, $\|\cdot\|_{\rm{F}}$ is the F-norm of a matrix. $\lambda$ is the pre-defined parameter for regularizer. $\mathbf{M} \in \mathbb{R}^{c\times n}$ is a matrix with the $i$-th column as $\mathbf{m}_i$.

There are several points should be highlighted here. (1) Although the formulation in Eq. (\ref{eq6}) seems complicated, we will propose an effective strategy to solve it. (2) There is an implicit function $\Phi$ in this formulation. We will show how to tackle it by kernel trick. (3) There is only one explicit balance parameter $\lambda$. We will tune it by cross validation.

\subsubsection{I-stage}

In A-stage, we have adapted all the on hand classifiers $\{ f_j\}_{j=1}^m$ to their adaption version $\{ g_j\}_{j=1}^m$ shown in Eq. (\ref{eq1}), by incorporating the data in the new coming view. As seen from the results in Table \ref{table_accuracy}, although this kind of adaption could improve the performances of original classifiers in most cases, it cannot guarantee the security. For example, the adaption of Naive Bayes classifier (AdNaiveBayes) performs even worse than the results in the first column in Table \ref{table_accuracy}, which is the best classification results of $\{ f_j\}_{j=1}^m$.

Assume that the prediction of $f_j$ on testing data $\{\mathbf{x}_i^{(1)} \in \mathbb{R}^{d_1} \}_{i=n+1}^{n+t}$ is $f_j(\mathbf{x}_i^{(1)})$. For convenience, we change the definition of label vector from -1 vs 1 to 0 vs 1. The corresponding predicted label vector is denoted as $\hat{\mathbf{y}}_i^{(j)} \in \{0,1\}^{c\times1}$. Denote $\hat{\mathbf{Y}}^{(j)} = [\hat{\mathbf{y}}_{n+1}^{(j)},\hat{\mathbf{y}}_{n+2}^{(j)}, \cdots,\hat{\mathbf{y}}_{n+t}^{(j)}] \in \{0,1\}^{c \times t}$. Correspondingly, assume the prediction of $g_j$ on testing data $\{\mathbf{x}_i^{(1)}, \mathbf{x}_i^{(2)} \}_{i=n+1}^{n+t}$ is $g_j(\mathbf{x}_i^{(1)}, \mathbf{x}_i^{(2)})$. The predicted label vector is denoted as $\bar{\mathbf{y}}_i^{(j)} \in \{0,1\}^{c\times1}$. Denote $\bar{\mathbf{Y}}^{(j)} = [\bar{\mathbf{y}}_{n+1}^{(j)},\bar{\mathbf{y}}_{n+2}^{(j)}, \cdots,\bar{\mathbf{y}}_{n+t}^{(j)}] \in \{0,1\}^{c \times t}$. Denote $\mathbf{Y}^{*} = [\mathbf{y}_{n+1},\mathbf{y}_{n+2}, \cdots,\mathbf{y}_{n+t}] \in \{0,1\}^{c \times t}$ as the ground truth of testing labels.

Note that if $\hat{\mathbf{Y}}^{(j)}$ and $\mathbf{Y}^{*}$ are the 0-1 label matrix, the squared F-norm between them could measure the prediction error of $\mathbf{Y}^{(j)}$. In other words, we have
\begin{equation}
\label{eq7}
\small{
\begin{split}
\rm{Acc}(\hat{\mathbf{Y}}^{(j)}) = 1 - \frac{\|\hat{\mathbf{Y}}^{(j)}- \mathbf{Y}^{*}\|_{\rm{F}}^2}{2t}.
\end{split}}
\end{equation}
Here $\rm{Acc}(\hat{\mathbf{Y}}^{(j)})$ indicates the classification accuracy computed by $\hat{\mathbf{Y}}^{(j)}$.

To guarantee the security, we want to derive $\mathbf{Y} \in \{0,1\}^{c \times t}$, whose classification accuracy is no smaller than all $\rm{Acc}(\hat{\mathbf{Y}}^{(j)})$ for $j=1,2\cdots,m$. In other words, we aim to derive $\mathbf{Y}$, which performs never worse than the largest $\rm{Acc}(\hat{\mathbf{Y}}^{(j)})$. This requirement indicates the following constraints.
\begin{equation}
\label{eq8}
\small{
\begin{split}
\|\mathbf{Y}- \mathbf{Y}^{*}\|_{\rm{F}}^2 \leq \min_j \|\hat{\mathbf{Y}}^{(j)}- \mathbf{Y}^{*}\|_{\rm{F}}^2.
\end{split}}
\end{equation}

Since $\mathbf{Y}^{*}$ is unknown, we cannot obtain the optimal $\mathbf{Y}$ by solving the problem in Eq. (\ref{eq8}) directly. Nevertheless, we have adapted $f_j$ to $g_j$, it is commonly recognized that $g_j$ performs better than $f_j$. In other words, compared with $f_j$, $g_j$ is closer to $\mathbf{Y}^{*}$. This phenomenon can also be observed since $f_j$ is a special case of $g_j$ when $\lambda_1=1$ and $\lambda_2=0$. Based on this observation, we replace $\mathbf{Y}^{*}$ by $\bar{\mathbf{Y}}^{(k)}$ as a surrogate. Since we do not know which $\bar{\mathbf{Y}}^{(k)}$ is the best, the only way is adding all the constraints as follows.
\begin{equation}
\label{eq9}
\small{
\begin{split}
\|\mathbf{Y}- \bar{\mathbf{Y}}^{(k)}\|_{\rm{F}}^2 \leq \min_j \|\hat{\mathbf{Y}}^{(j)}- \bar{\mathbf{Y}}^{(k)}\|_{\rm{F}}^2, ~\forall~k=1,2,\cdots,m.
\end{split}}
\end{equation}

To obtain the optimal solution, TIFC optimizes the following problem.
\begin{equation}
\label{eq10}
\small{
\begin{split}
          & \arg\max_{\mathbf{Y}, \epsilon}~\epsilon\\
\rm{s.t.~} &\|\mathbf{Y}- \bar{\mathbf{Y}}^{(k)}\|_{\rm{F}}^2 = \min_j \|\hat{\mathbf{Y}}^{(j)}- \bar{\mathbf{Y}}^{(k)}\|_{\rm{F}}^2-\epsilon, ~\forall~k=1,2,\cdots, m\\
          & \epsilon \geq 0 \\
          & \mathbf{Y} \in \{0,1\}^{c \times t} \rm{~is~a~label~matrix.}
\end{split}}
\end{equation}

Note that the optimal solution to Eq. (\ref{eq10}) cannot guarantee the secure constraints in Eq. (\ref{eq8}). We will show that with some mild constraints, TIFC is provable secure.

\subsection{Why and When the Proposal Works}

Before going into the detail analysis, as can be observed from Eq. (\ref{eq10}), the distance of $\|\mathbf{Y}- \mathbf{Y}^{*}\|_{\rm{F}}^2$ should be smaller than $\min_j \|\hat{\mathbf{Y}}^{(j)}- \mathbf{Y}^{*}\|_{\rm{F}}^2$ if $\mathbf{Y}^{*} \in \{\bar{\mathbf{Y}}^{(k)}\}_{k=1}^m$. Such an observation motivates us to get the following results.

\begin{theorem} \label{proposition1}
Assume that $\mathbf{Y}$ is the optimal solution to Eq. (\ref{eq10}). $\|\mathbf{Y}- \mathbf{Y}^{*}\|_{\rm{F}}^2 \leq \min_j \|\hat{\mathbf{Y}}^{(j)}- \mathbf{Y}^{*}\|_{\rm{F}}^2$ if the ground truth label matrix $\mathbf{Y}^{*} \in \{\bar{\mathbf{Y}}^{(k)}\}_{k=1}^m$.
\end{theorem}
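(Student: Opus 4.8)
The plan is to extract the conclusion directly from one of the equality constraints in Eq.~(\ref{eq10}), namely the one whose index matches the ground truth. First I would use the hypothesis $\mathbf{Y}^{*}\in\{\bar{\mathbf{Y}}^{(k)}\}_{k=1}^m$ to fix an index $k_{0}\in\{1,\dots,m\}$ with $\bar{\mathbf{Y}}^{(k_{0})}=\mathbf{Y}^{*}$. Since $\mathbf{Y}$ is the optimal (in particular, feasible) solution of Eq.~(\ref{eq10}), the equality constraint for $k=k_{0}$ reads
\begin{equation*}
\|\mathbf{Y}-\bar{\mathbf{Y}}^{(k_{0})}\|_{\rm{F}}^{2}=\min_{j}\|\hat{\mathbf{Y}}^{(j)}-\bar{\mathbf{Y}}^{(k_{0})}\|_{\rm{F}}^{2}-\epsilon .
\end{equation*}

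Next I would invoke the remaining constraint $\epsilon\ge 0$ to drop the slack term, which turns the equality into the bound $\|\mathbf{Y}-\bar{\mathbf{Y}}^{(k_{0})}\|_{\rm{F}}^{2}\le\min_{j}\|\hat{\mathbf{Y}}^{(j)}-\bar{\mathbf{Y}}^{(k_{0})}\|_{\rm{F}}^{2}$. Substituting $\bar{\mathbf{Y}}^{(k_{0})}=\mathbf{Y}^{*}$ on both sides then gives exactly $\|\mathbf{Y}-\mathbf{Y}^{*}\|_{\rm{F}}^{2}\le\min_{j}\|\hat{\mathbf{Y}}^{(j)}-\mathbf{Y}^{*}\|_{\rm{F}}^{2}$, which is the claim. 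I would close by observing, via Eq.~(\ref{eq7}), that this inequality is equivalent to $\rm{Acc}(\mathbf{Y})\ge\max_{j}\rm{Acc}(\hat{\mathbf{Y}}^{(j)})$, so the integrated prediction $\mathbf{Y}$ is secure with respect to $\{f_{j}\}_{j=1}^{m}$ in the sense introduced earlier; note that only feasibility of $\mathbf{Y}$, not its optimality, is used.

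There is no genuinely hard step here — the argument is a two-line manipulation of the $k_{0}$-th constraint. The two points I would be careful about are: (i) the statement implicitly presumes Eq.~(\ref{eq10}) is feasible (otherwise ``the optimal solution $\mathbf{Y}$'' is undefined), so I would either treat feasibility as inherited from the hypothesis or remark separately on when a feasible pair $(\mathbf{Y},\epsilon)$ exists; and (ii) the guarantee is conditional — without $\mathbf{Y}^{*}\in\{\bar{\mathbf{Y}}^{(k)}\}_{k=1}^m$, the surrogate constraints Eq.~(\ref{eq9}) need not imply the true security constraint Eq.~(\ref{eq8}), which is precisely why the paper frames this as a mild-assumption (rather than unconditional) security result, and I would make that limitation explicit in the write-up.
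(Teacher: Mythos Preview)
Your argument is correct and is precisely the ``direct'' proof the paper alludes to; indeed, the paper omits the proof entirely, stating only that it is direct. Your extra remarks on feasibility and on the conditional nature of the guarantee are accurate and go slightly beyond what the paper says, but they do not conflict with it.
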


Proof of this proposition is direct and we would like to omit it. Since the assumption $\mathbf{Y}^{*} \in \{\bar{\mathbf{Y}}^{(k)}\}_{k=1}^m$ is too strict, we would like to relax it and get another proposition.

\begin{theorem} \label{proposition2}
Assume that $\mathbf{Y}$ is the optimal solution to Eq. (\ref{eq10}). If there exists a $\bar{\mathbf{Y}}^{(k)}$, such that
\begin{equation}
\label{eq11}
\small{
\begin{split}
\rm{Tr} ((\hat{\mathbf{Y}}^{(j)}-\mathbf{Y})^\top ( \mathbf{Y}^{*}-\bar{\mathbf{Y}}^{(k)} )) \leq 0,~\forall~j=1,2,\cdots, m.
\end{split}}
\end{equation}
Then, $\|\mathbf{Y}- \mathbf{Y}^{*}\|_{\rm{F}}^2 \leq \min_j \| \hat{\mathbf{Y}}^{(j)}- \mathbf{Y}^{*}\|_{\rm{F}}^2$.
\end{theorem}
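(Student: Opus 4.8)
The plan is to prove the claimed inequality for every fixed index $j$ and then take the minimum over $j$. Fix once and for all an index $k$ for which the hypothesis (\ref{eq11}) holds, and treat the surrogate $\bar{\mathbf{Y}}^{(k)}$ as a \emph{pivot} through which the comparison between $\mathbf{Y}$ and $\hat{\mathbf{Y}}^{(j)}$ is routed. The single algebraic fact I would use is the polarization-type identity valid for any conformable real matrices $A,B,C,D$:
\[
\|A-C\|_{\rm{F}}^2-\|B-C\|_{\rm{F}}^2=\bigl(\|A-D\|_{\rm{F}}^2-\|B-D\|_{\rm{F}}^2\bigr)-2\,\mathrm{Tr}\bigl((A-B)^\top(C-D)\bigr),
\]
which follows by expanding $\|X-Z\|_{\rm{F}}^2=\|X\|_{\rm{F}}^2-2\,\mathrm{Tr}(X^\top Z)+\|Z\|_{\rm{F}}^2$ for each of the four squared norms and cancelling the purely quadratic part $\|A\|_{\rm{F}}^2-\|B\|_{\rm{F}}^2$ that is common to both sides.

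Substituting $A=\hat{\mathbf{Y}}^{(j)}$, $B=\mathbf{Y}$, $C=\mathbf{Y}^{*}$ and $D=\bar{\mathbf{Y}}^{(k)}$ rewrites the difference of interest as
\[
\|\hat{\mathbf{Y}}^{(j)}-\mathbf{Y}^{*}\|_{\rm{F}}^2-\|\mathbf{Y}-\mathbf{Y}^{*}\|_{\rm{F}}^2=\bigl(\|\hat{\mathbf{Y}}^{(j)}-\bar{\mathbf{Y}}^{(k)}\|_{\rm{F}}^2-\|\mathbf{Y}-\bar{\mathbf{Y}}^{(k)}\|_{\rm{F}}^2\bigr)-2\,\mathrm{Tr}\bigl((\hat{\mathbf{Y}}^{(j)}-\mathbf{Y})^\top(\mathbf{Y}^{*}-\bar{\mathbf{Y}}^{(k)})\bigr).
\]
Then I would bound the two terms on the right separately. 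For the first bracket, the equality constraint of Eq.~(\ref{eq10}) evaluated at this $k$ gives $\|\mathbf{Y}-\bar{\mathbf{Y}}^{(k)}\|_{\rm{F}}^2=\min_{j'}\|\hat{\mathbf{Y}}^{(j')}-\bar{\mathbf{Y}}^{(k)}\|_{\rm{F}}^2-\epsilon$ with $\epsilon\ge 0$, hence $\|\mathbf{Y}-\bar{\mathbf{Y}}^{(k)}\|_{\rm{F}}^2\le\|\hat{\mathbf{Y}}^{(j)}-\bar{\mathbf{Y}}^{(k)}\|_{\rm{F}}^2$ for the fixed $j$, so the bracket is nonnegative. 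For the second term, the hypothesis (\ref{eq11}) asserts exactly that the trace is $\le 0$, so $-2\,\mathrm{Tr}(\cdots)\ge 0$. Adding the two nonnegative quantities yields $\|\hat{\mathbf{Y}}^{(j)}-\mathbf{Y}^{*}\|_{\rm{F}}^2\ge\|\mathbf{Y}-\mathbf{Y}^{*}\|_{\rm{F}}^2$ for every $j$, and taking the minimum over $j$ gives the conclusion.

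I do not expect any genuine obstacle: the statement is pure linear algebra once the decomposition through the pivot $\bar{\mathbf{Y}}^{(k)}$ is recognized. The only points requiring care are bookkeeping ones — checking that the residual left after replacing $\mathbf{Y}^{*}$ by $\bar{\mathbf{Y}}^{(k)}$ is precisely the trace in (\ref{eq11}) up to the factor $-2$, and making sure that the constraint invoked from Eq.~(\ref{eq10}) is the one indexed by the same $k$ supplied by the hypothesis. As a consistency check, Proposition~\ref{proposition1} is recovered as the degenerate case $\bar{\mathbf{Y}}^{(k)}=\mathbf{Y}^{*}$, for which the cross term vanishes identically and only the constraint of Eq.~(\ref{eq10}) is needed.
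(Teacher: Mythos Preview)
Your proof is correct and follows essentially the same route as the paper's: both pivot through the surrogate $\bar{\mathbf{Y}}^{(k)}$, obtain the inequality $\|\mathbf{Y}-\bar{\mathbf{Y}}^{(k)}\|_{\rm F}^2\le\|\hat{\mathbf{Y}}^{(j)}-\bar{\mathbf{Y}}^{(k)}\|_{\rm F}^2$ from the constraint in Eq.~(\ref{eq10}), and then expand to isolate the trace term of Eq.~(\ref{eq11}). The only cosmetic difference is that you state the expansion upfront as a four-point polarization identity, whereas the paper inserts $\pm\mathbf{Y}^{*}$ into each norm and expands inline; the algebra and the two ingredients being bounded are identical.
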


\begin{proof}
Without loss of generality, assume that $\bar{\mathbf{Y}}^{(1)}$ satisfies the inequality in Eq. (\ref{eq11}). Since $\mathbf{Y}$ is the optimal solution to Eq. (\ref{eq10}), we have
\begin{equation}
\label{eq12}
\small{
\begin{split}
\|\mathbf{Y}- \bar{\mathbf{Y}}^{(1)}\|_{\rm{F}}^2 \leq \|\hat{\mathbf{Y}}^{(j)}- \bar{\mathbf{Y}}^{(1)}\|_{\rm{F}}^2, ~\forall~j=1,2,\cdots, m.
\end{split}}
\end{equation}
Then, $\forall~j=1, 2, \cdots, m$,
\begin{equation}
\label{eq13}
\small{
\begin{split}
\|\mathbf{Y}- \mathbf{Y}^{*} + \mathbf{Y}^{*}-\bar{\mathbf{Y}}^{(1)}\|_{\rm{F}}^2 \leq \|\hat{\mathbf{Y}}^{(j)}-\mathbf{Y}^{*}+\mathbf{Y}^{*}- \bar{\mathbf{Y}}^{(1)}\|_{\rm{F}}^2.
\end{split}}
\end{equation}

After expanding both side of the above inequality and making some deductions, $\forall~j=1, 2, \cdots, m$, we get
\begin{equation*}
\label{eq14}
\small{
\begin{split}
     &\|\mathbf{Y}- \mathbf{Y}^{*}\|_{\rm{F}}^2-\|\hat{\mathbf{Y}}^{(j)}-\mathbf{Y}^{*}\|_{\rm{F}}^2 \leq \\
 &2\rm{Tr}((\hat{\mathbf{Y}}^{(j)}-\mathbf{Y}^{*})^\top (\mathbf{Y}^{*}- \bar{\mathbf{Y}}^{(1)}))-
2\rm{Tr}((\mathbf{Y}- \mathbf{Y}^{*})^\top (\mathbf{Y}^{*}-\bar{\mathbf{Y}}^{(1)} ))\\
& = 2\rm{Tr}((\hat{\mathbf{Y}}^{(j)}-\mathbf{Y})^\top (\mathbf{Y}^{*}- \bar{\mathbf{Y}}^{(1)})) \leq 0.
\end{split}}
\end{equation*}
The last inequality holds due to the assumption in Eq. (\ref{eq11}).

By minimizing both sides of the above inequality w.r.t. $j$, we get $\|\mathbf{Y}- \mathbf{Y}^{*}\|_{\rm{F}}^2 \leq \min_j \| \hat{\mathbf{Y}}^{(j)}- \mathbf{Y}^{*}\|_{\rm{F}}^2$.
\end{proof}

Proposition \ref{proposition2} shows that the proposed SEC method is provably secure when we have at least one adaption classifier, which satisfies the condition in Eq. (\ref{eq11}). We will verify this proposition via numerical results in Section \ref{sec_exp}. Note that this is just a sufficient rather than necessary condition for SEC. In other words, SEC may also work when the assumption in Eq. (\ref{eq11}) violates.

\begin{figure}[!t]
\centering
\includegraphics[width=0.4\textwidth]{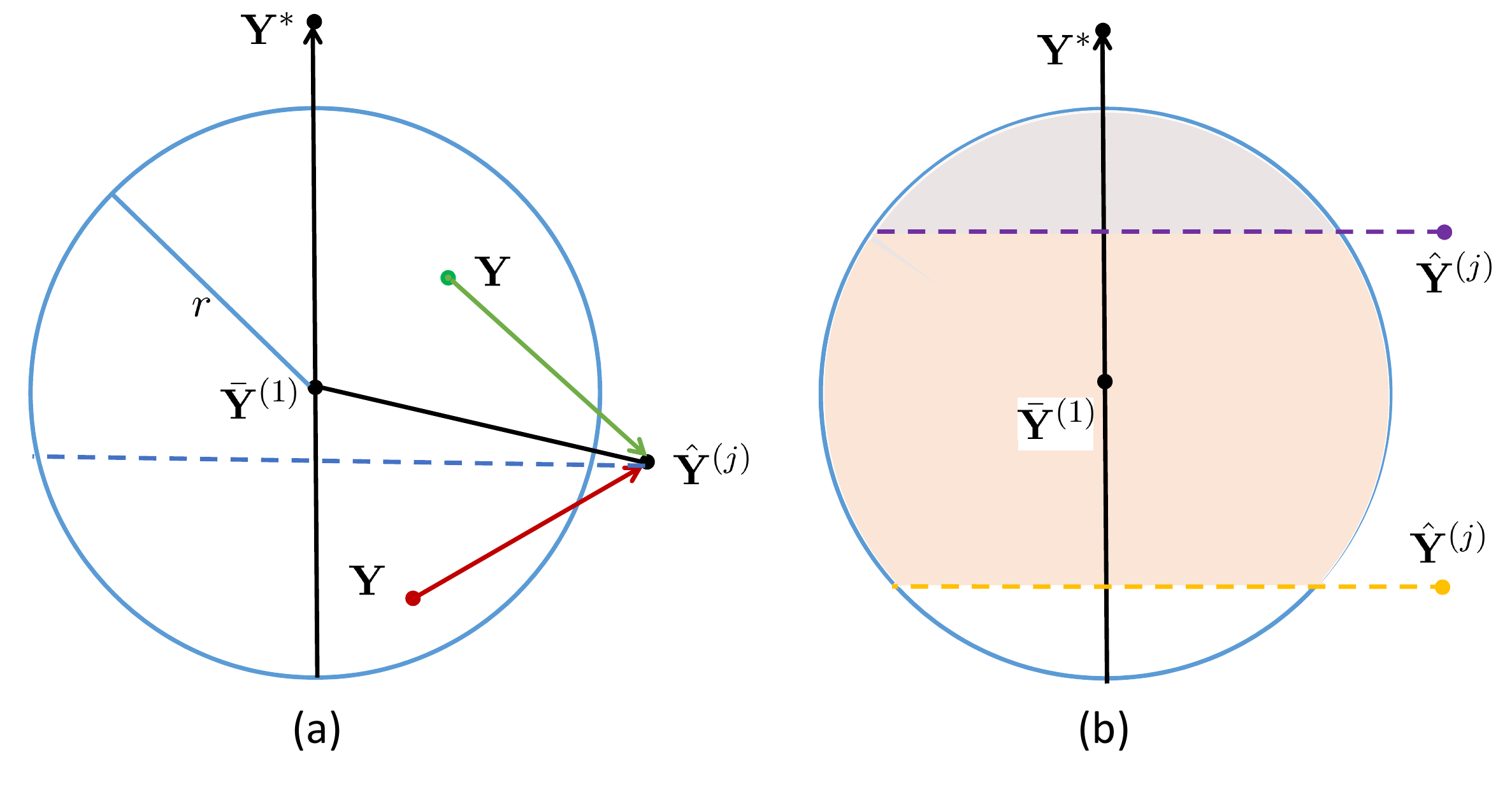}
\caption{Illustration of the intuition of the condition in Eq. (\ref{eq11}) via the inner product viewpoint. Intuitively, the green point $\mathbf{Y}$ is secure.}
\label{fig_illustration}
\vskip -0.15in
\end{figure}

There is an intuitive explanation about the condition in Eq. (\ref{eq11}). Similar to above deduction, assume that $\bar{\mathbf{Y}}^{(1)}$ satisfies the inequality in Eq. (\ref{eq11}). As seen from Fig. \ref{fig_illustration}(a), the first constraint in objective function of SEC in Eq. (\ref{eq10}) indicates that all the feasible solution $\mathbf{Y}$ is located within a circle, whose centre is $\bar{\mathbf{Y}}^{(1)}$ and radius is  $r \triangleq \sqrt{\min_j \|\hat{\mathbf{Y}}^{(j)}- \bar{\mathbf{Y}}^{(1)}\|_{\rm{F}}^2-\epsilon}$. According to the definition of $r$, $\hat{\mathbf{Y}}^{(j)}$ is often located outside of the circle. If we reshape the matrix to vector, the trace operation is just the inner product between the reshaped vectors. Recall that the sign of inner product dominates the angle between two vectors, we can regard that the condition in Eq. (\ref{eq11}) reflects the angle between $\hat{\mathbf{Y}}^{(j)}-\mathbf{Y}$ and $ \mathbf{Y}^{*}-\bar{\mathbf{Y}}^{(k)}$. As illustrated in Fig. \ref{fig_illustration}, if $\mathbf{Y}$ is located in the red point, the angle between $\hat{\mathbf{Y}}^{(j)}-\mathbf{Y}$ and $ \mathbf{Y}^{*}-\bar{\mathbf{Y}}^{(k)}$ is less than 90 degrees and thus, the condition in in Eq. (\ref{eq11}) violates. Otherwise, if $\mathbf{Y}$ is located in the green point, the condition in Eq. (\ref{eq11}) satisfies. Interestingly, when $\mathbf{Y}$ is located in the green point, it is closer to $\mathbf{Y}^{*}$ than that of $\hat{\mathbf{Y}}^{(j)}$. Thus, it is securer than $\hat{\mathbf{Y}}^{(j)}$. Otherwise, when $\mathbf{Y}$ is located in the red point, the inner product is positive and it is far from $\mathbf{Y}^{*}$. In fact, the feasible solution $\mathbf{Y}$, which satisfies the condition in Eq. (\ref{eq11}) is located within the upper circle to the dashed line.

More interestingly, as seen from Fig. \ref{fig_illustration}(b), if the adaption result ($\bar{\mathbf{Y}}^{(1)}$) is better than the original result ($\hat{\mathbf{Y}}^{(j)}$), the feasible region will be the orange area in Fig. \ref{fig_illustration}(b). On the contrary, if the adaption is useless, the feasible region is the gray area, which is much smaller than the orange area. In other words, the reason why we improve the performance of original classifiers with augmented features is to provide well performed candidate classifiers, which can give us a large opportunity to find a feasible solution that satisfies the condition in Eq. (\ref{eq11}).

\subsection{Relation to Other Approaches}

Security is a general problem of many disciplines of machine learning. There are only some researches in semi-supervised learning \cite{icml/LiZ11, pami/LiZ15, aaai/LiKZ16, aaai/LiZZ17}, since using the unlabeled data is not usually helpful. Concretely, these methods focus on the safeness of using unlabeled data. For example, Safe Semi-Supervised SVM (S4VM) classifier is investigated in \cite{pami/LiZ15} and Safe Regression model(SAFER) in proposed in \cite{aaai/LiZZ17}. Although these methods also study the security, our SEC is totally different from them from at least the following aspects. (1) These methods are developed in the semi-supervised learning paradigm, which focus on the exploring of unlabeled data, while SEC emphasizes on the usage of new coming features. They are raised from different disciplines with different goals. (2) Both S4VM and SAFER try to learn a well-performed semi-supervised base classifiers, while SEC aims at adapting classifiers on hand by utilizing data with new features. (3) Previous safe semi-supervised methods investigate only one-type of classifiers. For example, S4VM is the safe version of Semi-Supervised SVM and SAFER is the extension of traditional regression model. SEC can take all types of classifiers as the candidates. It is a general method, without specifying the classifier type.

As for the effectiveness of using multiple features, there are some investigations in multi-view learning. Commonly, there are two significant principles ensuring the success of multi-view learning, i.e., consensus and complementary principles \cite{xu2013MVLsurvey}. Consensus principle aims to maximize the agreement on multiple distinct views. For example, Dasgupta et al. have shown the connection between the consensus of two hypotheses on two views and their error rates \cite{DasguptaLM01}. Complementary principle requires that each view of the data may contain some knowledge that other views do not have. For example, Wang and Zhou have explained why co-training style algorithms succeed when there are no redundant views \cite{ecml/WangZ07}. Nevertheless, (1) different from these studies, we investigate the effectiveness of utilizing new features in another perspective. We aim to design a secure classifier while these approaches aim to find when the muti-view learning methods work. (2) Traditional multi-view learning approaches assume that all views are available while we assume that the data are coming in sequence.

As for diagnostic classification of schizophrenia, the machine learning algorithms have been successfully applied \cite{Arbabshirani2013Classification, Du2012High, Shen2009, Watanabe2014Disease}. For example, Du et al. have combined Kernel Principal Component Analysis (KPCA) and group Independent Component Analysis (ICA) to aid diagnosis of schizophrenia by using fMRI data acquired from an auditory oddball task paradigm \cite{Du2012High}. We have introduced an unsupervised learning-based classifier to discriminate patients by applying a combination of nonlinear dimensionality reduction and self-organized clustering algorithms \cite{Shen2009}. In addition, the altered resting-state functional network connectivity (FNC) among auditory, frontal-parietal, default-mode, visual, and motor networks have been adopted for classification of schizophrenia patients by using K-nearest neighbors classifier \cite{Arbabshirani2013Classification}. A recent schizophrenia classification challenge demonstrated clearly, across a broad range of classification approaches, the value of fMRI data in capturing useful information about this disease \cite{Silva2014}. There are at least two differences of SEC from these approaches. (1) They are designed with different motivations. SEC aims at learning a secure classifier while traditional methods are designed to achieve higher accuracies. (2) In most of traditional methods, only one type of features is employed. While in SEC, we consider the augmented features.

\section{Optimization}
\label{sec_optimization}

There are totally two optimization problems. i.e., Eq. (\ref{eq6}) and Eq. (\ref{eq10}). These problems are complicated. There are three groups of variables in Eq. (\ref{eq6}), i.e., $\mathbf{W}$ and $\mathbf{b}$, $\mathbf{M}$, $\lambda_1$ and $\lambda_2$ and the loss itself is sophisticated. The objective function in Eq. (\ref{eq10}) is also hard to solve due to the constraint.

\subsection{Solving Eq. (\ref{eq6})}

There are two parts in Eq. (\ref{eq6}). The difficulty lies in the capped hinge loss. Denote
\begin{equation}
\label{eq15}
\small{
\begin{split}
  &\xi_i(\mathbf{W},\mathbf{b}, \mathbf{m}_i, \lambda_1,\lambda_2) =  \\
  &\| \lambda_1 f_j(\mathbf{x}_i^{(1)})+\lambda_2 (\mathbf{W}^T \Phi(\mathbf{x}_i^{(2)})+\mathbf{b}) - \mathbf{y}_i-\mathbf{y}_i \circ \mathbf{m}_i\|_2^2 \\
  & h_i(\xi_i(\mathbf{W},\mathbf{b}, \mathbf{m}_i, \lambda_1,\lambda_2)) = \min ( \sqrt{\xi_i(\mathbf{W},\mathbf{b}, \mathbf{m}_i, \lambda_1,\lambda_2)}, 1).
\end{split}}
\end{equation}

Then the objective function in Eq. (\ref{eq6}) can be reformulated as $\lambda \|\mathbf{W}\|_{\rm{F}}^{2} + h_i(\xi_i(\mathbf{W},\mathbf{b}, \mathbf{m}_i, \lambda_1, \lambda_2))$. Note that $h_i$ is a concave function, we can borrow the optimization proposal in \cite{NieWH17} to solve this problem. Concretely, by calculating the supergradient of the concave function $h_i$ at point $\xi_i(\mathbf{W}, \mathbf{b}, \mathbf{m}_i, \lambda_1, \lambda_2)$, we replace the solution to the problem in Eq. (\ref{eq6}) by optimizing
\begin{equation}
\label{eq16}
\small{
\begin{split}
& \arg \min_{\mathbf{W},\mathbf{b}, \mathbf{M} \geq 0, \lambda_1,\lambda_2} \lambda \|\mathbf{W}\|_{\rm{F}}^{2} +  \\
& \sum_{i=1}^n \theta _i \| \lambda_1 f_j(\mathbf{x}_i^{(1)})+\lambda_2 (\mathbf{W}^T \Phi(\mathbf{x}_i^{(2)})+\mathbf{b}) - \mathbf{y}_i-\mathbf{y}_i \circ \mathbf{m}_i\|_2^2,
\end{split}}
\end{equation}
with
\begin{equation}
\label{eq17}
\theta_i = \left\{
{\begin{array}{*{20}{l}}
{\frac{1}{2} \xi_i(\mathbf{W},\mathbf{b}, \mathbf{m}_i, \lambda_1,\lambda_2)^{-\frac{1}{2}},~\xi_i^{\frac{1}{2}} \leq 1}\\
{0,~~~~~~~~~~~~~~~~~~~~~~~~~~~~~~~~~~~~~~\rm{otherwise}}.
\end{array}} \right.
\end{equation}

We apply the alternative strategy to optimize four groups of variables in Eq. (\ref{eq16}), $\mathbf{W}$ and $\mathbf{b}$, $\mathbf{m}_i$, $\lambda_1$ and $\lambda_2$, $\theta_i$, iteratively. Although the updating rule of $\theta_i$ is listed in Eq. (\ref{eq17}), it cannot be updated directly since we do not know the mapping function $\Phi$.

\subsubsection{Fixing $\mathbf{m}_i$, $\lambda_1$, $\lambda_2$, $\theta_i$ and optimizing $\mathbf{W}$, $\mathbf{b}$}

Denote $\mathbf{z}_i = \mathbf{y}_i+\mathbf{y}_i \circ \mathbf{m}_i-\lambda_1 f_j(\mathbf{x}_i^{(1)})$ and $\mathbf{Z} \in \mathbb{R}^{c\times n}$ with the $i$-th column as $\mathbf{z}_i$. Denote $\mathbf{e} \in \mathbb{R}^{n\times 1}$ the vector with all elements as 1. The matrix form of the optimization problem in Eq. (\ref{eq16}) is
\begin{equation}
\label{eq18}
\small{
\begin{split}
 \arg \min_{\mathbf{W},\mathbf{b}}~~\lambda \|\mathbf{W}\|_{\rm{F}}^{2} +  & \rm{Tr} \left( (\lambda_2 \mathbf{W}^T \Phi(\mathbf{X}_{tr}^{(2)})+ \lambda_2 \mathbf{b}\mathbf{e}^\top-\mathbf{Z})  \right.\\
  \Theta & \left. ( \lambda_2 \mathbf{W}^T \Phi(\mathbf{X}_{tr}^{(2)})+ \lambda_2 \mathbf{b}\mathbf{e}^\top-\mathbf{Z})^\top\right),
\end{split}}
\end{equation}
where $\Theta$ is a diagonal matrix with the $i$-th diagonal element as $\theta_i$ defined in Eq. (\ref{eq17}). $\Phi(\mathbf{X}_{tr}^{(2)}) \in \mathbb{R}^ {D\times n}$ is a matrix with the $i$-th column as $\Phi(\mathbf{x}_i^{(2)})$.

By taking the derivative of Eq. (\ref{eq18}) w.r.t. $\mathbf{b}$ and setting it to zeros, we have
\begin{equation}
\label{eq19}
\small{
\begin{split}
\mathbf{b} = \frac{1}{\lambda_2 \mathbf{e}^\top \Theta \mathbf{e}} \mathbf{Z} \Theta \mathbf{e} - \frac{1}{\mathbf{e}^\top \Theta \mathbf{e}} \mathbf{W}^T \Phi(\mathbf{X}_{tr}^{(2)}) \Theta \mathbf{e}.
\end{split}}
\end{equation}

Substituting Eq. (\ref{eq19}) into Eq. (\ref{eq18}) and setting the derivative w.r.t. $\mathbf{W}$ to zeros, we get
\begin{equation}
\label{eq20}
\small{
\begin{split}
\mathbf{W} = (\lambda_2^2 \Phi(\mathbf{X}_{tr}^{(2)}) \mathbf{H} (\Phi(\mathbf{X}_{tr}^{(2)}))^\top +\lambda\mathbf{I})^{-1} \lambda_2 \Phi(\mathbf{X}_{tr}^{(2)}) \mathbf{H}\mathbf{Z}^\top,
\end{split}}
\end{equation}
where $\mathbf{H}= \Theta -\frac{1}{\mathbf{e}^\top \Theta \mathbf{e}} \Theta \mathbf{e} \mathbf{e}^\top \Theta$ and $\mathbf{I}$ is an identity matrix.

The solutions in Eq. (\ref{eq19}) and Eq. (\ref{eq20}) cannot be calculated directly since we do not know the concrete form of $\Phi$. When testing data $\mathbf{X}_{te}^{(2)}$ comes, we only need to compute $\mathbf{W}^\top \Phi(\mathbf{X}_{te}^{(2)})+\mathbf{b}\tilde{\mathbf{e}}$ with
$\tilde{\mathbf{e}} \in \mathbb{R}^{t\times 1}$ the vector with all elements as 1. We now derive them by kernel trick. Notice that the matrix equality $(\mathbf{AB}+\mathbf{C})^{-1} \mathbf{A} = \mathbf{C}^{-1} \mathbf{A} (\mathbf{B}\mathbf{C}^{-1}\mathbf{A}+ \mathbf{I})^{-1}$. Denote $\mathbf{A} = \lambda_2^2 \Phi(\mathbf{X}_{tr}^{(2)}) \mathbf{H}$, $\mathbf{B} = \Phi(\mathbf{X}_{tr}^{(2)})$ and $\mathbf{C}=\lambda \mathbf{I}$, Eq. (\ref{eq20}) becomes
\begin{equation}
\label{eq21}
\small{
\begin{split}
& \mathbf{W} = \\
& (\lambda \mathbf{I})^{-1} \lambda_2 \Phi(\mathbf{X}_{tr}^{(2)}) \mathbf{H}
\left( (\Phi(\mathbf{X}_{tr}^{(2)}))^\top (\lambda \mathbf{I})^{-1} \lambda_2^2 \Phi(\mathbf{X}_{tr}^{(2)}) \mathbf{H} + \mathbf{I} \right)^{-1} \mathbf{Z}^\top\\
& = \lambda_2 \Phi(\mathbf{X}_{tr}^{(2)}) \mathbf{H} \left( \lambda_2^2 (\Phi(\mathbf{X}_{tr}^{(2)}))^\top \Phi(\mathbf{X}_{tr}^{(2)})\mathbf{H} +\lambda \mathbf{I} \right)^{-1}  \mathbf{Z}^\top\\
& = \lambda_2 \Phi(\mathbf{X}_{tr}^{(2)}) \mathbf{H} \left( \lambda_2^2 \mathbf{K}_{tr}^{(2)} \mathbf{H} +\lambda \mathbf{I} \right)^{-1}  \mathbf{Z}^\top.\\
\end{split}}
\end{equation}
Here $\mathbf{K}_{tr}^{(2)}$ is the kernel matrix determined by $\mathbf{X}_{tr}^{(2)}$. Now,
\begin{equation}
\label{eq22}
\small{
\begin{split}
& \mathbf{W}^\top \Phi(\mathbf{X}_{te}^{(2)}) =
\lambda_2 \mathbf{Z} \left( \lambda_2^2 \mathbf{K}_{tr}^{(2)} \mathbf{H} +\lambda \mathbf{I} \right)^{-1} \mathbf{H} \Phi(\mathbf{X}_{tr}^{(2)})^\top \Phi(\mathbf{X}_{te}^{(2)}).
\end{split}}
\end{equation}
Here $\mathbf{K}_{tr,te}^{(2)} \triangleq \Phi(\mathbf{X}_{tr}^{(2)})^\top \Phi(\mathbf{X}_{te}^{(2)})$ can be computed by employing the kernel trick.

Substituting Eq. (\ref{eq21}) into Eq. (\ref{eq19}), we get
\begin{equation}
\label{eq23}
\small{
\begin{split}
\mathbf{b} = \frac{\mathbf{ZDe}}{\lambda_2 \mathbf{e}^\top \Theta \mathbf{e}}-
\frac{\lambda_2}{\mathbf{e}^\top \Theta \mathbf{e}}
 \mathbf{Z} \left( \lambda_2^2 \mathbf{K}_{tr}^{(2)} \mathbf{H} +\lambda \mathbf{I} \right)^{-1} \mathbf{H} \mathbf{K}_{tr}^{(2)} \Theta \mathbf{e},
\end{split}}
\end{equation}
which can be derived by kernel trick directly.

\subsubsection{Fixing $\mathbf{W}$, $\mathbf{b}$, $\lambda_1$, $\lambda_2$, $\theta_i$ and optimizing $\mathbf{m}_i$}

When updating $\mathbf{m}_i$, we can optimize the problem in Eq. (\ref{eq16}) w.r.t. each $\mathbf{m}_i$ by solving the following problem.
\begin{equation}
\label{eq24}
\small{
\begin{split}
\arg \min_{\mathbf{m}_i \geq 0} \| \lambda_1 f_j(\mathbf{x}_i^{(1)})+\lambda_2 (\mathbf{W}^T \Phi(\mathbf{x}_i^{(2)})+\mathbf{b}) - \mathbf{y}_i-\mathbf{y}_i \circ \mathbf{m}_i\|_2^2.
\end{split}}
\end{equation}

Since $\mathbf{y}_i \in \{-1,1\}$, we multiply Eq. (\ref{eq24}) by $\mathbf{y}_i$ and reformulate it as
\begin{equation}
\label{eq25}
\small{
\begin{split}
\min_{\mathbf{m}_i \geq 0} \| \lambda_1 \mathbf{y}_i\circ f_j(\mathbf{x}_i^{(1)})+ \lambda_2 \mathbf{y}_i\circ (\mathbf{W}^T \Phi(\mathbf{x}_i^{(2)})+\mathbf{b}) - \bar{\mathbf{e}}- \mathbf{m}_i\|_2^2,
\end{split}}
\end{equation}
where $\bar{\mathbf{e}} \in \mathbb{R}^{c\times 1}$ is the vector with all elements as 1.

The optimal solution to Eq. (\ref{eq25}) is
\begin{equation}
\label{eq26}
\small{
\begin{split}
\mathbf{m}_i= \delta(\lambda_1 \mathbf{y}_i\circ f_j(\mathbf{x}_i^{(1)})+ \lambda_2 \mathbf{y}_i\circ (\mathbf{W}^T \Phi(\mathbf{x}_i^{(2)})+\mathbf{b}) - \bar{\mathbf{e}} ),
\end{split}}
\end{equation}
where the $p$-th element of $\delta(\mathbf{m})$ is $\max\{m(p),0\}$.

Similar to Eq. (\ref{eq22}), we can compute $\mathbf{W}^T \Phi(\mathbf{x}_i^{(2)})$ as
\begin{equation}
\label{eq27}
\small{
\begin{split}
& \mathbf{W}^\top \Phi(\mathbf{x}_i^{(2)}) =
\lambda_2 \mathbf{Z} \left( \lambda_2^2 \mathbf{K}_{tr}^{(2)} \mathbf{H} +\lambda \mathbf{I} \right)^{-1} \mathbf{H} \Phi(\mathbf{X}_{tr}^{(2)})^\top \Phi(\mathbf{x}_i^{(2)})
\end{split}}
\end{equation}
where $\Phi(\mathbf{X}_{tr}^{(2)})^\top \Phi(\mathbf{x}_i^{(2)})$ is the $i$-th column of $\mathbf{K}_{tr}^{(2)}$.

\subsubsection{Fixing $\mathbf{W}$, $\mathbf{b}$, $\mathbf{m}_i$, $\theta_i$ and optimizing $\lambda_1$, $\lambda_2$}

When updating $\lambda_1$, $\lambda_2$, we go back to the original problem in Eq. (\ref{eq16}). To make the notations simple, we denote
\begin{equation}
\label{eq28}
\small{
\begin{split}
& \mathbf{a}_i = f_j(\mathbf{x}_i^{(1)}) \in \mathbf{R}^{c\times1} \\
& \mathbf{b}_i = \mathbf{W}^T \Phi(\mathbf{x}_i^{(2)})+\mathbf{b} \in \mathbf{R}^{c\times1}\\
& \mathbf{c}_i =  \mathbf{y}_i+\mathbf{y}_i \circ \mathbf{m}_i \in \mathbf{R}^{c\times1}.
\end{split}}
\end{equation}

The optimization problem concerning updating $\lambda_1$ and $\lambda_2$ becomes
\begin{equation}
\label{eq29}
\small{
\begin{split}
& \arg \min_{\lambda_1,\lambda_2}
& \sum_{i=1}^n \theta _i \| \lambda_1 \mathbf{a}_i +\lambda_2 \mathbf{b}_i - \mathbf{c}_i\|_2^2.
\end{split}}
\end{equation}

Take the derivative of Eq. (\ref{eq29}) w.r.t. $\lambda_1$, $\lambda_2$ and set them to zeros, we can get the updating rule of $\lambda_1$ and $\lambda_2$ as
\begin{equation}
\label{eq30}
\small{
\begin{split}
& \lambda_1 = \frac{\sum_{i=1}^n \theta_i (\mathbf{a}_i^\top \mathbf{c}_i \mathbf{b}_i^\top \mathbf{b}_i-\mathbf{b}_i^\top \mathbf{c}_i \mathbf{b}_i^\top \mathbf{a}_i)}
{\sum_{i=1}^n \theta_i (\mathbf{a}_i^\top \mathbf{a}_i \mathbf{b}_i^\top \mathbf{b}_i-\mathbf{a}_i^\top \mathbf{b}_i \mathbf{a}_i^\top \mathbf{b}_i)} \\
& \lambda_2 = \frac{\sum_{i=1}^n \theta_i (\mathbf{b}_i^\top \mathbf{c}_i \mathbf{a}_i^\top \mathbf{a}_i-\mathbf{a}_i^\top \mathbf{c}_i \mathbf{a}_i^\top \mathbf{b}_i)}
{\sum_{i=1}^n \theta_i (\mathbf{a}_i^\top \mathbf{a}_i \mathbf{b}_i^\top \mathbf{b}_i-\mathbf{a}_i^\top \mathbf{b}_i \mathbf{a}_i^\top \mathbf{b}_i)}.
\end{split}}
\end{equation}

According to the definition of $\mathbf{a}_i$, $\mathbf{b}_i$ and $\mathbf{c}_i$ in Eq. (\ref{eq28}), we can also calculate all of them by kernel trick.

\subsubsection{Fixing $\mathbf{W}$, $\mathbf{b}$, $\mathbf{m}_i$, $\lambda_1$, $\lambda_2$ and updating $\theta_i$}

The updating rule of $\theta_i$ is given in Eq. (\ref{eq17}). We need to compute $\xi_i(\mathbf{W}, \mathbf{b}, \mathbf{m}_i, \lambda_1, \lambda_2)$, whose definition is shown in Eq. (\ref{eq15}). Considering the results in Eq. (\ref{eq27}), we can also derive $\xi_i$ by kernel trick. Thus, $\theta_i$ can be updated using Eq. (\ref{eq17}) directly.

In summary, the procedure of A-stage of SEC is listed in Algorithm \ref{alg1}. The following propositions can guarantee the convergence of our iteration.
\begin{algorithm}[!t]
\caption{SEC: A-stage}
\label{alg1}
\small{
\begin{algorithmic}
\STATE Initialize $\lambda_1$=$\lambda_2$ = 1/2, $\Theta = \mathbf{I}$, $\mathbf{M}=\mathbf{0}$. Calculate the kernel matrices $\mathbf{K}_{tr, te}^{(2)}$ and $\mathbf{K}_{tr}^{(2)}$. \\
\STATE  \textbf{Repeat} \\
\STATE 1: Calculate $\mathbf{Z}$ with the $i$-th column as $\mathbf{z}_i = \mathbf{y}_i+\mathbf{y}_i \circ \mathbf{m}_i-\lambda_1 f_j(\mathbf{x}_i^{(1)})$. Calculate $\mathbf{H} = \Theta -\frac{1}{\mathbf{e}^\top \Theta \mathbf{e}} \Theta \mathbf{e} \mathbf{e}^\top \Theta$.  \\
\STATE 2: Update $\mathbf{b}$ using Eq. (\ref{eq23}).
\STATE 3: Update $\mathbf{W}^\top \Phi(\mathbf{X}_{te}^{(2)})$ and $\mathbf{W}^\top \Phi(\mathbf{x}_i^{(2)})$ using Eq. (\ref{eq22}) and Eq. (\ref{eq27}) respectively.
\\
\STATE 4: Update $\mathbf{m}_i$ using Eq. (\ref{eq26}).
\\
\STATE 5: Update $\lambda_1$ and $\lambda_2$ using Eq. (\ref{eq30}).
\\
\STATE 6: Update $\theta_i$ using Eq. (\ref{eq17}).
\\
\STATE \textbf{Until converges}
\end{algorithmic}}
\end{algorithm}

\begin{theorem} \label{proposition3}
The Algorithm \ref{alg1}, which optimizes the problem in Eq. (\ref{eq16}) alternatively, will decrease the objective function values of the problem in Eq. (\ref{eq6}) in each iteration until it converges.
\end{theorem}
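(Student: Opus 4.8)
The plan is to read Algorithm~\ref{alg1} as a majorization--minimization (re-weighting) scheme driven by the concavity of the capping function $h(\xi)=\min(\sqrt{\xi},1)$, and to establish the monotone decrease in three moves: first, that one full sweep of the block updates never increases the \emph{surrogate} objective in Eq.~(\ref{eq16}) with the weights $\theta_i$ frozen; second, that a supergradient inequality for $h$ transfers this decrease to the \emph{true} objective in Eq.~(\ref{eq6}); and third, that boundedness below then yields convergence of the objective sequence.

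For the first move, fix the outer index $t$, collect the variables as $\Omega=(\mathbf{W},\mathbf{b},\{\mathbf{m}_i\},\lambda_1,\lambda_2)$ and write $Q(\Omega;\theta)=\lambda\|\mathbf{W}\|_{\rm F}^{2}+\sum_{i=1}^n\theta_i\,\xi_i(\Omega)$ with $\xi_i$ as in Eq.~(\ref{eq15}). With $\theta^{(t)}=(\theta_1^{(t)},\dots,\theta_n^{(t)})$ held fixed, $Q(\cdot;\theta^{(t)})$ is bounded below (each $\theta_i^{(t)}\ge0$ by Eq.~(\ref{eq17}) and $\lambda>0$) and is a convex quadratic in each of the three blocks separately: strictly convex in $(\mathbf{W},\mathbf{b})$ thanks to $\Theta\succeq0$ and the Tikhonov term, a non-negativity-constrained least squares in each $\mathbf{m}_i$, and the quadratic of Eq.~(\ref{eq29}) in $(\lambda_1,\lambda_2)$. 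Hence the closed-form updates in Eqs.~(\ref{eq23}), (\ref{eq22})/(\ref{eq27}), (\ref{eq26}) and~(\ref{eq30}) are exact block minimizers, so a single sweep gives $Q(\Omega^{(t+1)};\theta^{(t)})\le Q(\Omega^{(t)};\theta^{(t)})$.

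For the second move, note that $h$ is concave and non-decreasing on $[0,\infty)$ and that the weight $\theta_i^{(t)}$ defined by Eq.~(\ref{eq17}) is exactly a supergradient of $h$ at $\xi_i(\Omega^{(t)})$ (namely $\tfrac12\xi^{-1/2}$ on the branch $\sqrt{\xi}\le1$, and $0$ beyond). Concavity then gives, for every $i$, $h(\xi_i(\Omega^{(t+1)}))-\theta_i^{(t)}\xi_i(\Omega^{(t+1)})\le h(\xi_i(\Omega^{(t)}))-\theta_i^{(t)}\xi_i(\Omega^{(t)})$; summing over $i$ and using that $J$ and $Q(\cdot;\theta^{(t)})$ differ only by the common term $\lambda\|\mathbf{W}\|_{\rm F}^{2}$, rearrangement yields
\[
 J(\Omega^{(t+1)})-J(\Omega^{(t)})\ \le\ Q(\Omega^{(t+1)};\theta^{(t)})-Q(\Omega^{(t)};\theta^{(t)})\ \le\ 0,
\]
where $J$ is the objective of Eq.~(\ref{eq6}) and the last inequality is the first move. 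Finally, since $J(\Omega)\ge0$ for every feasible $\Omega$, the sequence $\{J(\Omega^{(t)})\}$ is non-increasing and bounded below, hence convergent, which is the claim.

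The genuine obstacle is the bookkeeping in the second move at the two non-smooth regimes of $h$: at the kink $\sqrt{\xi}=1$ one must check that \emph{any} element of the superdifferential is admissible (the choice $0$ in Eq.~(\ref{eq17}) is), and as $\xi\to0$ the weight $\tfrac12\xi^{-1/2}$ blows up, so one either argues only on iterates with $\xi_i>0$ or replaces $\xi_i$ by $\xi_i+\varepsilon$ for a fixed small $\varepsilon>0$, which only perturbs $h$ and leaves the argument intact. A minor secondary point is non-degeneracy of the $(\lambda_1,\lambda_2)$ system in Eq.~(\ref{eq30}): its denominator $\sum_i\theta_i(\mathbf{a}_i^\top\mathbf{a}_i\,\mathbf{b}_i^\top\mathbf{b}_i-(\mathbf{a}_i^\top\mathbf{b}_i)^2)$ is non-negative by Cauchy--Schwarz and positive unless every $\mathbf{a}_i$ is parallel to $\mathbf{b}_i$, so the block minimizer is generically unique; in the degenerate case one keeps the previous $(\lambda_1,\lambda_2)$, which still does not increase $Q$.
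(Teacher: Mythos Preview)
Your argument is correct and is precisely the re-weighted majorization--minimization scheme the paper intends: the paper relegates the full proof to its supplementary, but the surrounding text (Eqs.~(\ref{eq15})--(\ref{eq17}) and the explicit appeal to \cite{NieWH17}) makes clear that their proof proceeds exactly via the concavity/supergradient inequality for $h(\xi)=\min(\sqrt{\xi},1)$, reducing the decrease of Eq.~(\ref{eq6}) to block-coordinate descent on the weighted surrogate Eq.~(\ref{eq16}). Your handling of the non-smooth corner cases (the kink at $\xi=1$, the blow-up as $\xi\to0$, and the possible degeneracy of the $(\lambda_1,\lambda_2)$ system) is more explicit than what one typically finds in such supplementary proofs, but the core mechanism is the same.
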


The proof is listed in the supplementary. Briefly, we avoid solving the complicated original problem in Eq. (\ref{eq6}) by substituting it with Eq. (\ref{eq16}), which can be easily addressed. More importantly, this substitution is reasonable since it can decrease the objective function values of original problem in Eq. (\ref{eq6}). We show some numerical results in the supplementary. Furthermore, the solution also has following proposition.
\begin{theorem} \label{proposition4}
The Algorithm \ref{alg1} will converge to a stationary point of the problem in Eq. (\ref{eq6}), which is usually a local minimum.
\end{theorem}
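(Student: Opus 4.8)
The plan is to combine the monotone descent already established in Proposition~\ref{proposition3} with a compactness argument and the majorize--minimization (MM) reading of Eq.~(\ref{eq16}), in the spirit of standard analyses of block‑coordinate MM schemes. Throughout, since Eq.~(\ref{eq6}) is nonsmooth (both the $\ell_2$ norm at the origin and the cap $\min(\cdot,1)$) and nonconvex, ``stationary point'' is understood in the sense that $\mathbf{0}$ lies in the limiting subdifferential of the objective plus the normal cone of the constraint set $\{\mathbf{M}\ge\mathbf{0}\}$; this is precisely why the claim is only a stationary point, ``usually a local minimum''.

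First I would show the iterates stay in a compact set. Let $F$ denote the objective of Eq.~(\ref{eq6}) and $F^{(t)}$ its value at the $t$‑th iterate. By Proposition~\ref{proposition3} the sequence $\{F^{(t)}\}$ is non‑increasing and bounded below by $0$, hence convergent, and every iterate obeys $\lambda\|\mathbf{W}^{(t)}\|_{\mathrm{F}}^{2}\le F^{(0)}$, so $\{\mathbf{W}^{(t)}\}$ is bounded. The remaining blocks come from the closed‑form updates Eq.~(\ref{eq23}), Eq.~(\ref{eq26}) and Eq.~(\ref{eq30}): $\mathbf{m}_i^{(t)}$ is a nonnegative truncation of a bounded vector, and $\mathbf{b}^{(t)},\lambda_1^{(t)},\lambda_2^{(t)}$ are ratios of bounded quantities built from $\mathbf{W}^{(t)}$, the fixed predictions $f_j(\mathbf{x}_i^{(1)})$ and the kernel matrices, so, under the mild nondegeneracy that the denominators in Eq.~(\ref{eq30}) and $\mathbf{e}^{\top}\Theta\mathbf{e}$ stay bounded away from $0$, the whole iterate sequence lies in a compact set $\mathcal{C}$. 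By Bolzano--Weierstrass some subsequence converges to a point $\Omega^{\star}=(\mathbf{W}^{\star},\mathbf{b}^{\star},\mathbf{M}^{\star},\lambda_1^{\star},\lambda_2^{\star})$.

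Second I would exploit the MM structure. With $\xi_i$ and $h_i$ as in Eq.~(\ref{eq15}), concavity of $h_i$ gives, for $\theta_i\in\partial h_i(\xi_i^{(t)})$ as in Eq.~(\ref{eq17}),
\begin{equation}
\label{eq_mm_majorize}
h_i(\xi_i)\;\le\;h_i\!\big(\xi_i^{(t)}\big)+\theta_i\big(\xi_i-\xi_i^{(t)}\big),\qquad\text{with equality at }\xi_i=\xi_i^{(t)} .
\end{equation}
Summing and adding $\lambda\|\mathbf{W}\|_{\mathrm{F}}^{2}$ shows that, up to an additive constant, Eq.~(\ref{eq16}) is a global majorizer $Q(\cdot\,|\,\Omega^{(t)})$ of $F$ that is tangent to $F$ at $\Omega^{(t)}$; moreover Step~6 resets $\theta_i$ to the supergradient at the current $\xi_i$, so in the limit the weights are \emph{consistent}, i.e.\ $\theta_i^{\star}\in\partial h_i(\xi_i^{\star})$. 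Since each block of Eq.~(\ref{eq16}) with $\Theta$ frozen is a convex problem solved exactly in Steps~2--5 ($\mathbf{W},\mathbf{b}$: a strongly convex quadratic; $\mathbf{M}$: a nonnegativity‑constrained separable quadratic; $(\lambda_1,\lambda_2)$: a convex quadratic), a classical block‑coordinate‑descent argument — consecutive iterates differ by $o(1)$ because $\{F^{(t)}\}$ converges, and each block's KKT condition passes to the limit along the convergent subsequence — yields that $\Omega^{\star}$ satisfies the block‑wise optimality conditions of Eq.~(\ref{eq16}) with weights $\Theta^{\star}$. Concatenating these gives $\mathbf{0}\in\partial_{\Omega}Q(\Omega^{\star}\,|\,\Omega^{\star})+N(\Omega^{\star})$, and the tangency in Eq.~(\ref{eq_mm_majorize}), which forces $\partial_{\Omega}Q(\Omega^{\star}\,|\,\Omega^{\star})=\partial_{\Omega}F(\Omega^{\star})$ at the contact point, transfers this to $\mathbf{0}\in\partial_{\Omega}F(\Omega^{\star})+N(\Omega^{\star})$: $\Omega^{\star}$ is a stationary point of Eq.~(\ref{eq6}). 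Nonconvexity of $F$ blocks any upgrade to a global minimizer, hence the ``usually a local minimum'' wording.

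I expect the main obstacle to be the step where block‑coordinate‑wise stationarity is promoted to stationarity of the nonsmooth, nonconvex surrogate: the classical counterexamples to plain BCD must be excluded, which here rests on the per‑block convexity and the essential uniqueness of the block minimizers (more carefully, the Tseng / Grippo--Sciandrone regularity conditions for descent over two or more blocks with a separable constraint term). A secondary difficulty is the boundedness of the un‑regularized variables $\lambda_1,\lambda_2,\mathbf{b},\mathbf{M}$ together with well‑posedness of the ratios in Eq.~(\ref{eq30}), which either follows from the thresholding/ratio form of the updates or has to be posited as a mild nondegeneracy assumption. One should also keep in mind that Algorithm~\ref{alg1} performs a single sweep of block updates per outer iteration rather than fully minimizing the surrogate, so the correct framework is the ``one‑sweep BCD $+$ MM'' variant rather than textbook majorize--minimization.
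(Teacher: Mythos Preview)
The paper defers its proof of Proposition~\ref{proposition4} to the supplementary material, which is not included in the provided source, so a line-by-line comparison is impossible. That said, your proposal follows exactly the template one expects here and that the paper itself signals by citing \cite{NieWH17} for the reweighting construction: monotone descent from Proposition~\ref{proposition3}, boundedness of the iterates via the $\lambda\|\mathbf{W}\|_{\mathrm F}^2$ term, the majorize--minimize reading of Eq.~(\ref{eq16}) through the concave supergradient inequality, per-block exact convex minimization, and finally the tangency $\partial Q(\Omega^\star\,|\,\Omega^\star)\subset\partial F(\Omega^\star)$ to push block-wise KKT of the surrogate to stationarity of Eq.~(\ref{eq6}). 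This is almost certainly the same skeleton as the authors' supplementary proof, since it is the standard argument for iteratively reweighted / capped-norm schemes of this type.

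Your self-identified pressure points are the right ones and are worth keeping explicit rather than hiding. First, boundedness of $\mathbf{b},\mathbf{M},\lambda_1,\lambda_2$ does \emph{not} follow from the objective alone (none of them is regularized), so you are correct that it must come from the closed forms Eq.~(\ref{eq23}), Eq.~(\ref{eq26}), Eq.~(\ref{eq30}) together with a nondegeneracy assumption on $\mathbf{e}^\top\Theta\mathbf{e}$ and on the denominator in Eq.~(\ref{eq30}); the paper would presumably assume this tacitly. Second, promoting block-wise stationarity of the surrogate to joint stationarity is legitimate here because, with $\Theta$ frozen, the surrogate is jointly smooth (a quadratic) and the constraint set is a product, so a BCD fixed point satisfies the full first-order conditions; invoking Tseng / Grippo--Sciandrone is appropriate but arguably stronger than needed for a quadratic surrogate. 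The only place to be a touch more careful is the equality $\partial_\Omega Q(\Omega^\star\,|\,\Omega^\star)=\partial_\Omega F(\Omega^\star)$: at points where $\xi_i^\star\in\{0,1\}$ the outer $h_i$ is genuinely nonsmooth and one gets an inclusion of subdifferentials rather than an equality, which is still enough for the stationarity conclusion but should be phrased as $\partial F(\Omega^\star)\supset\theta_i^\star\,\partial\xi_i(\Omega^\star)$ via the chain rule for Clarke subdifferentials.
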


The proof is listed in the supplementary. Besides, empirical evidences in the appdendix show that Algorithm \ref{alg1} converges fast and it usually converges within 20 iterations.

\subsection{Solving Eq. (\ref{eq10})}

The optimization problem in Eq. (\ref{eq10}) is designed for security. Since there is a label matrix constraint on $\mathbf{Y}$, it is a combinational optimization problem in essence. In the following, we relax this hard constraint to a soft one. In other words, we assume that $\mathbf{Y}$ satisfies $0 \leq \mathbf{Y} \leq 1, \bar{\mathbf{e}}^\top \mathbf{Y} = \hat{\mathbf{e}}$. Here $\bar{\mathbf{e}} \in \mathbb{R}^{c\times 1}$ and $\hat{\mathbf{e}} \in \mathbb{R}^{t\times 1}$ are vectors with all elements as 1. The reasons for this relaxation are (1) It will facilitate solving the optimization problem in Eq. (\ref{eq10}). (2) This kind of approximation will not deteriorate the final performance heavily. We will harden the soft label after optimization. Since $\bar{\mathbf{Y}}^{(k)}$ and $\hat{\mathbf{Y}}^{(k)}$ are 0-1 label matrices, the hardness will not take great influence on the feasibility. According to Proposition \ref{proposition2}, it is usually a secure solution.

After relaxation, Eq. (\ref{eq10}) changes to
\begin{equation}
\label{eq31}
\small{
\begin{split}
          & \arg\max_{\mathbf{Y}, \epsilon}~\epsilon\\
\rm{s.t.~} &\|\mathbf{Y}- \bar{\mathbf{Y}}^{(k)}\|_{\rm{F}}^2 = \min_j \|\hat{\mathbf{Y}}^{(j)}- \bar{\mathbf{Y}}^{(k)}\|_{\rm{F}}^2-\epsilon, ~\forall~k=1,2,\cdots, m\\
          & \epsilon \geq 0 \\
          & 0 \leq \mathbf{Y} \leq 1,~~\bar{\mathbf{e}}^\top \mathbf{Y} = \hat{\mathbf{e}}.
\end{split}}
\end{equation}

This problem is matrix based. At first, we reformulate it as vector-based. Denote the vectorization of $\mathbf{Y}$ as $\mathbf{y} \in \mathbb{R}^{ct\times 1}$, which is formulated by connecting all the columns of $\mathbf{Y}$ one by one. Similarly, the vectorization of $\bar{\mathbf{Y}}^{(k)}$ can be derived by the same way and it is denoted as $\bar{\mathbf{y}}^{(k)}$.

When $k$ is fixed, we define $ q_k \triangleq \min_j \|\hat{\mathbf{Y}}^{(j)}- \bar{\mathbf{Y}}^{(k)}\|_{\rm{F}}^2$ since it is deterministic. Note that the optimization variables in Eq. (\ref{eq31}) are $\mathbf{Y}$ and $\epsilon$, we would like to join them into a unified vector
$\left[ {\begin{array}{*{10}{c}}
\epsilon \\
\mathbf{y}
\end{array}} \right]$.
The vector reformulation of Eq. (\ref{eq31}) is
\begin{equation}
\label{eq32}
\small{
\begin{split}
          & \arg \min_{\mathbf{y}, \epsilon}~
\left[\begin{array}{c c}
-1 & \mathbf{0}_{1\times ct}
\end{array}\right]
\left[\begin{array}{c}
\epsilon \\
\mathbf{y}
\end{array}\right] \\
\rm{s.t.~}
& \left[\begin{array}{cc}
\epsilon & \mathbf{y}^\top
\end{array}\right]
\left[\begin{array}{cc}
{0}&\mathbf{0}_{1\times ct}\\
\mathbf{0}_{ct\times 1}&\mathbf{I}_{ct\times ct}
\end{array}\right]
\left[\begin{array}{c}
\epsilon \\ \mathbf{y}
\end{array}\right]+\\
& \left[\begin{array}{cc}
1 & -2(\bar{\mathbf{y}}^{(k)})^\top
\end{array}\right]
\left[\begin{array}{c}
\epsilon \\ \mathbf{y}
\end{array}\right]
+ (\bar{\mathbf{y}}^{(k)})^\top \bar{\mathbf{y}}^{(k)}-q_k \le 0\\
& \mathbf{0}_{(1+ct)\times 1} \leq
\left[\begin{array}{c}
\epsilon \\
\mathbf{y}
\end{array}\right]
\leq
\left[\begin{array}{c}
+\infty \\
\mathbf{e}_{ct\times 1}
\end{array}\right] \\
&
\left[\begin{array}{c c}
\mathbf{0}_{t\times1} & \mathbf{Q}
\end{array}\right]
\left[\begin{array}{c}
\epsilon \\
\mathbf{y}
\end{array}\right] = \hat{\mathbf{e}}.
\end{split}}
\end{equation}
Here, the subscript indicates the matrix/vector size. $\mathbf{e}_{ct\times 1}$ is a $ct$-dimensional vector whose elements are all 1.

In Eq. (\ref{eq32}), since each column of $\mathbf{Y}$ corresponds to the soft label of a point, its sum should be 1. Thus, the definition of $\mathbf{Q} \in \mathbb{R}^{t\times ct}$ is
\begin{equation*}
\small{
\begin{split}
\mathbf{Q} = \left[ {\begin{array}{*{20}{c}}
\bar{\mathbf{e}}^\top &\mathbf{0}&\mathbf{0}& \cdots &\mathbf{0}\\
\mathbf{0}&\bar{\mathbf{e}}^\top&\mathbf{0}& \cdots &\mathbf{0}\\
 \vdots & \vdots & \vdots & \vdots & \vdots \\
\mathbf{0}&\mathbf{0}&\mathbf{0}& \cdots &\bar{\mathbf{e}}^\top
\end{array}} \right].
\end{split}}
\end{equation*}

Then, $\small{\bar{\mathbf{e}}^\top \mathbf{Y} = \hat{\mathbf{e}}~\Leftrightarrow~ \mathbf{Q} \mathbf{y} = \hat{\mathbf{e}}}$.

The problem in Eq. (\ref{eq32}) is a quadratically constrained linear program. It can be effectively solved by modern optimization tools, such as the CVX toolbox \cite{cvx, gb08}.

After deriving the optimal solution to Eq. (\ref{eq32}), we should reshape the elements of $\mathbf{y}$ to $\mathbf{Y}$. The class index is determined by the maximum values of each column of $\mathbf{Y}$.

In summary, in I-stage, we integrate the inherited classifiers from A-stage by solving a quadratically constrained linear program in Eq. (\ref{eq32}). The
procedure of I-stage of SEC is listed in Algorithm \ref{alg2}.

\begin{algorithm}[!t]
\caption{SEC: I-stage}
\label{alg2}
\small{
\begin{algorithmic}
\STATE 1: Calculate $q_k=\min_j \|\hat{\mathbf{Y}}^{(j)}- \bar{\mathbf{Y}}^{(k)}\|_{\rm{F}}^2$. Reshape $\bar{\mathbf{Y}}^{(k)}$ to $\bar{\mathbf{y}}^{(k)}$ for $k=1,2,\cdots, m$.  \\
\STATE 2: Solve the optimization problem in Eq. (\ref{eq32}) and derive the optimal $\mathbf{y}$.
\STATE 3: Reshape $\mathbf{y}$ to $\mathbf{Y}$ and determine the labels of testing points.
\end{algorithmic}}
\end{algorithm}

\subsection{Computational Complexity Analysis}

As seen from the procedure in Algorithm \ref{alg1}, we optimize four group of variables alternatively. The most time consuming steps are the computation of $\mathbf{W}^\top \Phi(\mathbf{X}_{te}^{(2)})$, $\mathbf{W}^\top \Phi(\mathbf{x}_i^{(2)})$ and $\mathbf{b}$ shown in step 2 and step 3 of Algorithm \ref{alg1}. Since we need to calculate the inverse of an matrix with size $n$, the computational complexity is $O(n^3)$.

As for Algorithm \ref{alg2}, we need to solve a convex quadric constrained linear programming with $(t\times c+1)$ variables. There are a lot of modern technologies for solution, such as \cite{cvx}. We use a naive implementation by interior-point methods and the computational complexity of this problem is
$O((t\times c+1)^3)$ \cite{Boyd:2004}.

In summary, the total computational complexity of SEC is $O((\max(n,t\times c+1))^3)$. We will show some experimental results. Due to the limitation of space, the results are listed in the supplementary. Besides, as seen from the procedures of SEC in Algorithm 1 and Algorithm 2, SEC can be regarded as an online approach. In other words, if we have more than one type of new coming features, we only need to conduct the adaption and integration stages on the latest type of feature, without retraining it using all previous data. Compared with the batch methods, it saves time with the increase of new type features.

\section{Experiments}
\label{sec_exp}

\begin{table*}[!t]
    \centering
    \caption{Details of the data sets with different types of features used in our experiments (feature type(dimensionality)).}
    \label{table_DataDetail}
    \scriptsize{
    \begin{tabular}{c||c c c c c c c c c}
    \hline
    Feature type & SensIT       & NBA-NASCAR  & Trecvid2003  & Ionoshpere & Protein    & Digit   & AD     & MSRC-v1        & Caltech-7 \\
    \hline
    1            & Acoustic(50) & Image(1024) & HSV(165)     & (34)       & FFT(4910)  & FOU(76) & ALT(111)  & CMT(48)        & LBP(256)  \\
    2            & Seismic(50)  & Text(296)   & Text(1894)   & (25)       & GE(441)    & KAR(64) & AURL(472)  & CENTRIST(1302) & CENTRIST(1302) \\
    3            & -            & -           & -            &  -         & Pfam(3753) &FAC(216) & URL(457)  & GIST(512)      & GIST(512)  \\
    4            & -            & -           &-             & -          &  -      &-        &-       & HOG(100 )      & HOG(100)  \\
    \hline
    Data points  & 20000        & 840         & 1078         & 351       &  629     & 2000    & 3264   & 210            &  441 \\
    \hline
    Classes      & 2            & 2           & 5            & 2         &  2       & 10      &    2   & 7              &  7  \\
    \hline
\end{tabular}}
\vskip -0.0in
\end{table*}

\begin{table*}[!t]
\caption{Testing accuracies (mean$\pm$std) of the compared methods on 16 data sets with different percent of training and testing examples. '$\bullet/${\tiny $\odot$}$/\circ$' denote respectively that the adaption methods or SEC are significantly better/tied/worse than the best results conducted on only on hand features by the $t$-test\cite{t-test} with confidence level 0.05. The top half table presents results with 30\% training versus 70\% testing and the bottom half table shows results with 50\% training versus 50\% testing.}
\label{table_accuracy}
\centering
\vskip -0.1in
{
\begin{tabular}{c|| c c c c c c c c }
\hline
  Methods       &  Best   &  AdRegression   & AdKNN    & AdNaiveBayes    & AdBoosting &  AdSVM(Lin) & AdSVM(RBF) & SEC \\
\hline
SensIT          & .5017(.0045)& .7915(.0114)$\bullet$ & .7944(.0024)$\bullet$ & .8052(.0050)$\bullet$ & .7909(.0103)$\bullet$ & .8082(.0062)$\bullet$ & .7447(.1380)$\bullet$ & .8132(.0327)$\bullet$ \\
NBA-NASCAR      & .7405(.0158)& .9913(.0038)$\bullet$ & .9947(.0032)$\bullet$ & .9874(.0056)$\bullet$ & .9970(.0031)$\bullet$ & .9942(.0056)$\bullet$ & .9963(.0037)$\bullet$ & .9908(.0055)$\bullet$ \\
Trecvid2003     & .6183(.0199)& .8347(.0172)$\bullet$ & .8318(.0179)$\bullet$ & .7763(.0250)$\bullet$ & .8184(.0279)$\bullet$ & .7948(.0271)$\bullet$ & .7989(.0253)$\bullet$ & .8218(.0216)$\bullet$ \\
Ionoshpere      & .9094(.0176)& .9864(.0169)$\bullet$ & .9872(.0100)$\bullet$ & .9668(.0389)$\bullet$ & .9592(.0345)$\bullet$ & .9829(.0192)$\bullet$ & .8721(.0761){\tiny $\odot$} & .9418(.0465)$\bullet$ \\
Protein1add2    & .8428(.0343)& .9548(.0295)$\bullet$ & .9861(.0114)$\bullet$ & .9833(.0132)$\bullet$ & .9783(.0201)$\bullet$ & .9474(.0458)$\bullet$ & .9560(.0504)$\bullet$ & .9363(.0516)$\bullet$ \\
Protein1add3    & .8426(.0312)& .8901(.0234)$\bullet$ & .8841(.0287)$\bullet$ & .8904(.0165)$\bullet$ & .9167(.0260)$\bullet$ & .8836(.0232)$\bullet$ & .9070(.0572)$\bullet$ & .8842(.0247)$\bullet$ \\
Digit1add2      & .8021(.0149)& .7933(.0376){\tiny $\odot$} & .8127(.0378){\tiny $\odot$} & .7443(.0173)$\circ$ & .6550(.0913)$\circ$ & .8586(.0103)$\bullet$ & .8580(.0121)$\bullet$ & .8617(.0160)$\bullet$ \\
Digit1add3      & .8100(.0093)& .8794(.0472)$\bullet$ & .9286(.0137)$\bullet$ & .7563(.0064)$\circ$ & .8491(.0057)$\bullet$ & .9420(.0130)$\bullet$ & .9419(.0139)$\bullet$ & .9447(.0146)$\bullet$ \\
AD1add2         & .8993(.0065)& .9569(.0044)$\bullet$ & .9567(.0039)$\bullet$ & .9569(.0048)$\bullet$ & .9577(.0049)$\bullet$ & .9551(.0053)$\bullet$ & .9528(.0050)$\bullet$ & .9565(.0047)$\bullet$ \\
AD1add3         & .8994(.0047)& .9421(.0035)$\bullet$ & .9424(.0035)$\bullet$ & .9406(.0034)$\bullet$ & .9410(.0034)$\bullet$ & .9424(.0029)$\bullet$ & .9410(.0036)$\bullet$ & .9398(.0033)$\bullet$ \\
MSRC-v1-1add2   & .6551(.0573)& .8133(.0344)$\bullet$ & .8463(.0315)$\bullet$ & .5160(.1056)$\circ$ & .8245(.0349)$\bullet$ & .7531(.0520)$\bullet$ & .7677(.0424)$\bullet$ & .8510(.0311)$\bullet$ \\
MSRC-v1-1add3   & .6612(.0778)& .8153(.0619)$\bullet$ & .8388(.0551)$\bullet$ & .4942(.1018)$\circ$ & .8241(.0608)$\bullet$ & .7738(.0654)$\bullet$ & .7939(.0616)$\bullet$ & .8408(.0577)$\bullet$ \\
MSRC-v1-1add4   & .6755(.0656)& .7041(.0515){\tiny $\odot$} & .7701(.0669)$\bullet$ & .4136(.0657)$\circ$ & .7527(.0612)$\bullet$ & .6789(.0697){\tiny $\odot$} & .6861(.0489){\tiny $\odot$} & .7772(.0661)$\bullet$ \\
Caltech-7-1add2 & .7000(.0484)& .8916(.0211)$\bullet$ & .8794(.0229)$\bullet$ & .7716(.0314)$\bullet$ & .8932(.0218)$\bullet$ & .8774(.0183)$\bullet$ & .8856(.0231)$\bullet$ & .8857(.0251)$\bullet$ \\
Caltech-7-1add3 & .6907(.0447)& .8550(.0224)$\bullet$ & .8653(.0268)$\bullet$ & .6927(.0668){\tiny $\odot$} & .8498(.0268)$\bullet$ & .8300(.0268)$\bullet$ & .8422(.0275)$\bullet$ & .8638(.0238)$\bullet$ \\
Caltech-7-1add4 & .7002(.0394)& .8128(.0269)$\bullet$ & .8333(.0317)$\bullet$ & .6885(.0390){\tiny $\odot$} & .8167(.0310)$\bullet$ & .7883(.0383)$\bullet$ & .7919(.0414)$\bullet$ & .8244(.0270)$\bullet$ \\
\hline \hline
SensIT          & .6013(.0057)& .7957(.0067)$\bullet$ & .7950(.0021)$\bullet$ & .8039(.0030)$\bullet$ & .7922(.0061)$\bullet$ & .8032(.0056)$\bullet$ & .8012(.0050)$\bullet$ & .8044(.0023)$\bullet$ \\
NBA-NASCAR      & .7506(.0151)& .9976(.0022)$\bullet$ & .9990(.0020)$\bullet$ & .9945(.0040)$\bullet$ & .9994(.0013)$\bullet$ & .9990(.0016)$\bullet$ & .9994(.0011)$\bullet$ & .9986(.0020)$\bullet$ \\
Trecvid2003     & .6313(.0195)& .8817(.0178)$\bullet$ & .8750(.0145)$\bullet$ & .8248(.0226)$\bullet$ & .8746(.0208)$\bullet$ & .8578(.0226)$\bullet$ & .8548(.0214)$\bullet$ & .8803(.0183)$\bullet$ \\
Ionoshpere      & .9157(.0177)& .9920(.0180)$\bullet$ & .9957(.0113)$\bullet$ & .9934(.0167)$\bullet$ & .9343(.0448){\tiny $\odot$} & .9946(.0157)$\bullet$ & .9806(.0364)$\bullet$ & .9971(.0047)$\bullet$ \\
Protein1add2    & .8529(.0162)& .9782(.0116)$\bullet$ & .9941(.0038)$\bullet$ & .9911(.0094)$\bullet$ & .9925(.0035)$\bullet$ & .9545(.0634)$\bullet$ & .8990(.0629)$\bullet$ & .9817(.0151)$\bullet$ \\
Protein1add3    & .8548(.0227)& .9212(.0146)$\bullet$ & .9363(.0237)$\bullet$ & .9277(.0179)$\bullet$ & .9462(.0149)$\bullet$ & .9124(.0253)$\bullet$ & .9041(.0484)$\bullet$ & .9159(.0303)$\bullet$ \\
Digit1add2      & .8138(.0084)& .8622(.0383)$\bullet$ & .8468(.0115)$\bullet$ & .7524(.0098)$\circ$   & .7664(.0746){\tiny $\odot$} & .8698(.0216)$\bullet$ & .8622(.0288)$\bullet$ & .8850(.0272)$\bullet$ \\
Digit1add3      & .8218(.0148)& .9482(.0155)$\bullet$ & .9378(.0105)$\bullet$ & .7582(.0142)$\circ$ & .8588(.0061)$\bullet$ & .9462(.0126)$\bullet$ & .9410(.0122)$\bullet$ & .9550(.0076)$\bullet$ \\
AD1add2         & .9012(.0025)& .9567(.0039)$\bullet$ & .9569(.0042)$\bullet$ & .9582(.0041)$\bullet$ & .9570(.0044)$\bullet$ & .9592(.0048)$\bullet$ & .9577(.0046)$\bullet$ & .9569(.0047)$\bullet$ \\
AD1add3         & .9039(.0041)& .9454(.0042)$\bullet$ & .9457(.0045)$\bullet$ & .9445(.0048)$\bullet$ & .9443(.0040)$\bullet$ & .9461(.0054)$\bullet$ & .9460(.0058)$\bullet$ & .9444(.0043)$\bullet$ \\
MSRC-v1-1add2   & .7605(.0749)& .8681(.0353)$\bullet$ & .8967(.0390)$\bullet$ & .6824(.0914)$\circ$ & .8733(.0362)$\bullet$ & .8200(.0434)$\bullet$ & .8276(.0345)$\bullet$ & .8957(.0408)$\bullet$ \\
MSRC-v1-1add3   & .7386(.0530)& .8938(.0309)$\bullet$ & .8919(.0302)$\bullet$ & .7105(.0820){\tiny $\odot$}& .8967(.0374)$\bullet$ & .8352(.0371)$\bullet$ & .8490(.0388)$\bullet$ & .8952(.0295)$\bullet$ \\
MSRC-v1-1add4   & .7390(.0274)& .7257(.0297){\tiny $\odot$} & .8171(.0483)$\bullet$ & .4514(.0570)$\circ$ & .8076(.0297)$\bullet$ & .7238(.0530){\tiny $\odot$} & .6990(.0306){\tiny $\odot$} & .8324(.0341)$\bullet$ \\
Caltech-7-1add2 & .7482(.0233)& .9091(.0243)$\bullet$ & .9127(.0099)$\bullet$ & .8355(.0171)$\bullet$ & .9091(.0144)$\bullet$ & .8827(.0189)$\bullet$ & .8991(.0174)$\bullet$ & .8836(.0460)$\bullet$ \\
Caltech-7-1add3 & .7727(.0252)& .8666(.0295)$\bullet$ & .8882(.0283)$\bullet$ & .7659(.0849){\tiny $\odot$} & .8620(.0212)$\bullet$ & .8648(.0359)$\bullet$ & .8659(.0371)$\bullet$ & .8841(.0332)$\bullet$ \\
Caltech-7-1add4 & .7441(.0455)& .8225(.0331)$\bullet$ & .8482(.0343)$\bullet$ & .6948(.0629)$\circ$ & .8214(.0347)$\bullet$ & .8043(.0353)$\bullet$ & .7911(.0405)$\bullet$ & .8293(.0406)$\bullet$ \\
\hline
\multicolumn{2}{c}{Win/Tie/Loss}& 29/3/0 & 31/1/0 & 18/4/10 & 29/2/1 & 30/2/0 & 29/3/0 & 32/0/0  \\
\hline
\end{tabular}}
\vskip -0.1in
\end{table*}

In this section, we perform experiments to evaluate the performance and efficiency of SEC. There are four groups of experiments. In the first group, to validate the security of SEC, we compare it with the best single-view classification result and the adaption results. In the second group, we compare SEC with several popular multi-view classification approaches. After that, to show the effectiveness of adaption, we list the classification accuracies before and after adaption of different types of classifiers. The numerical comparison is presented to verify the condition in Proposition \ref{proposition2}. Finally, we apply our approach in the application of diagnostic classification of schizophrenia. Before going into the details, let us introduce the data sets at first.

\subsection{Configuration}

There are total 16 classification results conducted on 9 different data sets. The brief summary of these data sets are listed in Table \ref{table_DataDetail} and the detailed description of each data sets are listed as follows.

\textbf{SensIT}\footnote{ http://www.ecs.umass.edu/~mduarte/Software.html} contains data from wireless distributed sensor networks. It is collected from two different types of sensors, that is, acoustic and seismic sensor to record different signals for three types of vehicle (three classes) in an intelligent transportation system. The first view data has 50-dimensional acoustic features and the new coming features are collected from the seismic sensors. For demonstration, we select the first 10000 data points from the first two classes and result in 20000 data point in 2 classes.

\textbf{NBA-NASCAR}\footnote{www.cst.ecnu.edu.cn/~slsun/software/MvLapSVMcode.zip} is collected from the sports gallery of the Yahoo website in 2008. Following \cite{apin/XieS14}, this data set consists of 420 NBA images and 420 NASCAR images. For each image, there is an attached short text describing information.  The first view is the gray features of each image, which is normalized to have 1024 gray features. The new coming features are obtained from attached short text and 296-dimensional TFIDF features have been extracted.

\textbf{TRECVID2003}\footnote{http://bigml.cs.tsinghua.edu.cn/~ningchen/data.htm} is a video data set, provided by the author of \cite{ChenZSX12}, which is composed of 1078 manually labeled video shots which belong to 5 categories. Each shot has two different representations, which are extracted from the associated key frame. The first is 165-dimensional vector of HSV color histogram. The new coming feature is the 1894-dimensional vector of text.

\textbf{Ionoshpere}\footnote{http://archive.ics.uci.edu/ml/datasets/Ionosphere} is collected by a system in Goose Bay, Labrador. This system
consists of a phased array of 16 high-frequency antennas and results in 34-dimensional observations. It includes 351 instances in total which are divided into 225 'Good' (positive) instances and 126 'Bad (negative) instances. As for new coming features, we capture all the data variance while reducing the dimensionality from 34 to 25 with PCA.

\textbf{Protein}\footnote{https://noble.gs.washington.edu/proj/sdp-svm/} consists of 629 yeast proteins and is divided into 2 classes: 497 membrane proteins and 132 ribosomal proteins. Each protein is represented by a 4910-dimensional vector of FFT features. There are two types of new coming features. The first is a 441 dimensional vector of gene expression (GE) features and the second is a 3735-dimensional vector of the Pfam features.

\textbf{Digit}\footnote{https://archive.ics.uci.edu/ml/datasets/Multiple+Features} contains 2,000 data points for 0 to 9 ten digit classes and each class has 200 data points. The 76 Fourier coefficients of the character shapes (FOU) are the on hand descriptions. We use another two types of descriptions as the new coming features. The first is 64-dimensional Karhunen-love coefficients (KAR) and the second is the 216-dimensional profile correlations (FAC).

\textbf{AD}\footnote{http://archive.ics.uci.edu/ml/datasets/Internet+Advertisements} is a set of possible advertisements on web pages. The task is to predict whether a web is an advertisement or not. This data set contains 3264 examples, among which 458 examples are advertisements. The first is the 111-dimensional descriptions concerning information of the alt terms (ALT). The first new coming features are the 472-dimensional descriptions of anchor text in ancurl (AURL). The second new coming features are the 457-dimensional descriptions of the phrases occurring in the URL

\textbf{MSRC-v1}\footnote{https://www.microsoft.com/en-us/research/project/} data set consists of 240 images and is divided into 8 classes. Following \cite{ijcv/LeeG09}, we select 7 classes composed of tree, building, airplane, cow, face, car, bicycle, and each class has 30 images. We extract different kinds of descriptions. The on hand descriptions are the 48-dimensional color moment (CMT) features. There are three types of new coming features. The first one is 1302-dimensional CENTRIST feature. The second is 512-dimensional GIST feature and the last is 100-dimensional HOG feature.

\begin{table*}[!t]
\caption{F-score results (mean$\pm$std) of the compared methods on 16 data sets with different number of training and testing examples. The other settings are the same as that in Table \ref{table_accuracy}.}
\label{table_fscore}
\centering
\vskip -0.1in
{
\begin{tabular}{c|| c c c c c c c c }
\hline
  Methods       &  Best   &  AdRegression   & AdKNN    & AdNaiveBayes    & AdBoosting &  AdSVM(Lin) & AdSVM(RBF) & SEC \\
\hline
SensIT          & .4959(.0099) & .7903(.0118)$\bullet$ & .7940(.0024)$\bullet$ & .8044(.0052)$\bullet$ & .7899(.0106)$\bullet$ & .8077(.0059)$\bullet$ & .7108(.0116)$\bullet$ & .8072(.0081)$\bullet$ \\
NBA-NASCAR      & .7397(.0158) & .9913(.0038)$\bullet$ & .9947(.0032)$\bullet$ & .9874(.0056)$\bullet$ & .9970(.0031)$\bullet$ & .9942(.0056)$\bullet$ & .9963(.0037)$\bullet$ & .9908(.0055)$\bullet$ \\
Trecvid2003     & .5139(.0308) & .8141(.0180)$\bullet$ & .8095(.0171)$\bullet$ & .7444(.0307)$\bullet$ & .7938(.0303)$\bullet$ & .7682(.0289)$\bullet$ & .7730(.0293)$\bullet$ & .7964(.0233)$\bullet$ \\
Ionoshpere      & .8971(.0198) & .9852(.0182)$\bullet$ & .9857(.0112)$\bullet$ & .9643(.0410)$\bullet$ & .9536(.0399)$\bullet$ & .9808(.0213)$\bullet$ & .8622(.0948){\tiny $\odot$} & .9358(.0501)$\bullet$ \\
Protein1add2    & .7794(.0293) & .9225(.0591)$\bullet$ & .9781(.0195)$\bullet$ & .9740(.0211)$\bullet$ & .9648(.0351)$\bullet$ & .9022(.0991)$\bullet$ & .9120(.1245)$\bullet$ & .9453(.0256)$\bullet$ \\
Protein1add3    & .7789(.0227) & .8036(.0540){\tiny $\odot$} & .7865(.0627){\tiny $\odot$} & .8297(.0262)$\bullet$ & .8568(.0524)$\bullet$ & .8042(.0642){\tiny $\odot$} & .8452(.0983)$\bullet$ & .8403(.0539)$\bullet$ \\
Digit1add2      & .7899(.0191) & .7509(.0477){\tiny $\odot$} & .8119(.0381){\tiny $\odot$} & .7430(.0143)$\circ$ & .5773(.1142)$\circ$ & .8572(.0113)$\bullet$  & .8572(.0127)$\bullet$  & .8604(.0169)$\bullet$  \\
Digit1add3      & .8040(.0167) & .8569(.0677){\tiny $\odot$} & .9294(.0138)$\bullet$ & .7555(.0045)$\circ$ & .8101(.0052){\tiny $\odot$} & .9428(.0122)$\bullet$ & .9427(.0130)$\bullet$ & .9452(.0141)$\bullet$ \\
AD1add2         & .7131(.0206) & .8974(.0104)$\bullet$ & .8970(.0089)$\bullet$ & .8971(.0111)$\bullet$ & .8994(.0119)$\bullet$ & .8968(.0107)$\bullet$ & .8913(.0112)$\bullet$ & .8964(.0115)$\bullet$\\
AD1add3         & .7149(.0202) & .8564(.0088)$\bullet$ & .8578(.0091)$\bullet$ & .8510(.0094)$\bullet$ & .8520(.0094)$\bullet$ & .8610(.0083)$\bullet$ & .8569(.0128)$\bullet$ & .8480(.0116)$\bullet$ \\
MSRC-v1-1add2   & .6419(.0645) & .8141(.0337)$\bullet$ & .8464(.0316)$\bullet$ & .4992(.1093)$\circ$ & .8253(.0337)$\bullet$ & .7518(.0514)$\bullet$ & .7679(.0417)$\bullet$ & .8517(.0299)$\bullet$ \\
MSRC-v1-1add3   & .6516(.0631) & .8104(.0639)$\bullet$ & .8350(.0541)$\bullet$ & .4764(.1165)$\circ$ & .8199(.0622)$\bullet$ & .7682(.0680)$\bullet$ & .7896(.0634)$\bullet$ & .8375(.0571)$\bullet$ \\
MSRC-v1-1add4   & .6632(.0571) & .6773(.0657){\tiny $\odot$} & .7538(.0779)$\bullet$ & .3730(.0809)$\circ$ & .7328(.0721)$\bullet$ & .6498(.0848){\tiny $\odot$} & .6572(.0608){\tiny $\odot$} & .7616(.0763)$\bullet$ \\
Caltech-7-1add2 &.6094(.0574) & .8389(.0342)$\bullet$ & .8201(.0338)$\bullet$ & .6793(.0428)$\bullet$ & .8374(.0334)$\bullet$ & .8242(.0239)$\bullet$ & .8350(.0325)$\bullet$ & .8274(.0375)$\bullet$ \\
Caltech-7-1add3 & .5810(.0746) & .7673(.0435)$\bullet$  & .7966(.0466)$\bullet$  & .5918(.0764)$\circ$ & .7518(.0478)$\bullet$  & .7826(.0331)$\bullet$  & .7940(.0354)$\bullet$  & .7849(.0422)$\bullet$  \\
Caltech-7-1add4 & .6084(.0745) & .7373(.0493)$\bullet$  & .7585(.0545)$\bullet$  & .5897(.0485){\tiny $\odot$} & .7400(.0523)$\bullet$ & .7166(.0583)$\bullet$ & .7195(.0619)$\bullet$ & .7480(.0477)$\bullet$ \\
\hline
\hline
SensIT          & .4896(.0059) & .7950(.0065)$\bullet$  & .7940(.0029)$\bullet$  & .8017(.0038)$\bullet$  & .7917(.0057)$\bullet$  & .8024(.0057)$\bullet$  & .8004(.0049)$\bullet$  & .8029(.0028)$\bullet$  \\
NBA-NASCAR      & .7499(.0151) & .9976(.0022)$\bullet$  & .9990(.0020)$\bullet$  & .9945(.0040)$\bullet$  & .9994(.0013)$\bullet$  & .9990(.0016)$\bullet$ & .9994(.0011)$\bullet$  & .9986(.0020)$\bullet$ \\
Trecvid2003     & .5554(.0289) & .8699(.0200)$\bullet$  & .8612(.0165)$\bullet$  & .8039(.0249)$\bullet$  & .8621(.0217)$\bullet$  & .8400(.0252)$\bullet$  & .8369(.0251)$\bullet$  & .8673(.0202)$\bullet$ \\
Ionoshpere      & .9113(.0187) & .9913(.0195)$\bullet$ & .9953(.0124)$\bullet$ & .9930(.0177)$\bullet$ & .9253(.0510){\tiny $\odot$} & .9939(.0176)$\bullet$ & .9779(.0435)$\bullet$ & .9969(.0050)$\bullet$\\
Protein1add2    & .7526(.0513) & .9659(.0198)$\bullet$ & .9911(.0058)$\bullet$ & .9861(.0157)$\bullet$ & .9887(.0052)$\bullet$ & .9283(.0943)$\bullet$ & .7778(.0700){\tiny $\odot$} & .9713(.0247)$\bullet$\\
Protein1add3    & .7682(.0434) & .8732(.0267)$\bullet$ & .8978(.0399)$\bullet$ & .8950(.0253)$\bullet$ & .9140(.0238)$\bullet$ & .8600(.0410)$\bullet$ & .8260(.0971)$\bullet$ & .8588(.0583)$\bullet$ \\
Digit1add2      & .8052(.0134) & .8408(.0563){\tiny $\odot$} & .8463(.0121)$\bullet$ & .7524(.0096)$\circ$ & .7108(.0623)$\circ$ & .8687(.0218)$\bullet$ & .8599(.0296)$\bullet$ & .8837(.0271)$\bullet$ \\
Digit1add3      & .8184(.0181) & .9487(.0147)$\bullet$ & .9389(.0098)$\bullet$ & .7620(.0112)$\circ$ & .8189(.0060){\tiny $\odot$} & .9467(.0121)$\bullet$ & .9416(.0120)$\bullet$ & .9553(.0071)$\bullet$\\
AD1add2         & .7374(.0130) & .9004(.0129)$\bullet$ & .9008(.0131)$\bullet$ & .9035(.0128)$\bullet$ & .9007(.0132)$\bullet$ & .9088(.0131)$\bullet$ & .9044(.0134)$\bullet$ & .9005(.0141)$\bullet$ \\
AD1add3         & .7394(.0134) & .8689(.0111)$\bullet$ & .8689(.0118)$\bullet$ & .8651(.0141)$\bullet$ & .8654(.0114)$\bullet$ & .8723(.0127)$\bullet$ & .8720(.0135)$\bullet$ & .8647(.0129)$\bullet$ \\
MSRC-v1-1add2   & .7485(.0756) & .8674(.0354)$\bullet$ & .8962(.0397)$\bullet$ & .6827(.0992)$\circ$ & .8715(.0366)$\bullet$ & .8191(.0431)$\bullet$ & .8276(.0338)$\bullet$ & .8947(.0424)$\bullet$ \\
MSRC-v1-1add3   & .7312(.0571) & .8908(.0323)$\bullet$ & .8903(.0300)$\bullet$ & .7109(.0914)$\circ$ & .8942(.0388)$\bullet$ & .8331(.0366)$\bullet$ & .8466(.0395)$\bullet$ & .8929(.0320)$\bullet$ \\
MSRC-v1-1add4   & .7354(.0329) & .7101(.0495){\tiny $\odot$} & .8218(.0480)$\bullet$ & .4099(.0694)$\circ$ & .8075(.0301)$\bullet$ & .7089(.0584){\tiny $\odot$} & .6811(.0335)$\circ$ & .8361(.0356)$\bullet$ \\
Caltech-7-1add2 & .6403(.0526) & .8612(.0390)$\bullet$  & .8660(.0226)$\bullet$  & .7617(.0177)$\bullet$  & .8560(.0320)$\bullet$  & .8323(.0254)$\bullet$  & .8494(.0318)$\bullet$  & .8299(.0641)$\bullet$  \\
Caltech-7-1add3 & .6985(.0316) & .7990(.0564)$\bullet$ & .8338(.0424)$\bullet$ & .6823(.0563){\tiny $\odot$} & .7729(.0430)$\bullet$ & .8227(.0416)$\bullet$ & .8276(.0426)$\bullet$ & .8297(.0538)$\bullet$\\
Caltech-7-1add4 & .6609(.0469) & .7585(.0482)$\bullet$ & .7853(.0482)$\bullet$ & .6123(.0643)$\circ$ & .7568(.0524)$\bullet$ & .7407(.0542)$\bullet$ & .7285(.0585)$\bullet$ & .7679(.0570)$\bullet$ \\
\hline
\multicolumn{2}{c}{Win/Tie/Loss}& 26/6/0 & 30/2/0 & 18/2/12 & 27/3/2 & 29/3/0 & 28/3/1 & 32/0/0  \\
\hline
\end{tabular}}
\vskip -0.1in
\end{table*}

\textbf{Caltech-7} is a subsection of Caltech101\footnote{http://www.vision.caltech.edu/Image Datasets/Caltech101/}. Following \cite{iccv/DueckF07}, 7 widely used classes with 441 images were selected from the Caltech101. We extract the 256-dimensional LBP features on hand. Similar to the setting of MSRC-v1, three different descriptors have been extracted as the new coming features. The first one is 1302-dimensional CENTRIST feature. The second is 512-dimensional GIST feature and the last is 100-dimensional HOG feature.

We select six different types of popular classifiers as the baseline. They are Least Square Regression model, KNN classifier, Naive Bayes, Boosting, Linear SVM and RBF kernel SVM. Consequently, their classification results are denoted as $\hat{\mathbf{Y}}^{(k)}$ for $k=1,2,\cdots,6$. After adapting them by optimizing Eq. (\ref{eq6}), these methods are named as AdRegression, AdKNN, AdNaiveBayes, AdBoosting, AdSVM (Lin) and AdSVM (RBF) respectively. To evade the problem of parameter determination, the mapping function in Eq. (\ref{eq6}) is set as $\mathbf{\Phi(\mathbf{x})}=\mathbf{x}$.

As for parameter determination, there is only one additional parameter, i.e., $\lambda$ in Eq. (\ref{eq6}). We determine it by 5-fold cross validation for each baseline method. The baseline classifiers are provided by Matlab with the default parameters.

\subsection{Classification Performance Comparison}

To show the security of SEC, we compare it with the best classification results of 6 six baseline classifiers, which is denoted as Best. Besides, we also report the adaption version of all baseline methods. We first split all the data with different percentages of training and testing samples. The two ratios between training/testing are 0.3/0.7 and 0.5/0.5 respectively. With 50 independent random splitting, the mean classification accuracy results
of different methods on different data sets are presented in Table \ref{table_accuracy}. If the data set has more than one new coming feature, we add suffix on the original data set. For example, the data set named Protein1add2 means that the on hand feature is the feature type 1 and the new feature is feature type 2, which have been listed in Table \ref{table_DataDetail}. Together with these results, the paired $t$-test is also conducted. Besides, we also report the results with another popular evaluation metric, i.e., F-score and the results are listed in Table \ref{table_fscore}.

\begin{figure*}[!t]
\centering
\includegraphics[width=0.90\textwidth]{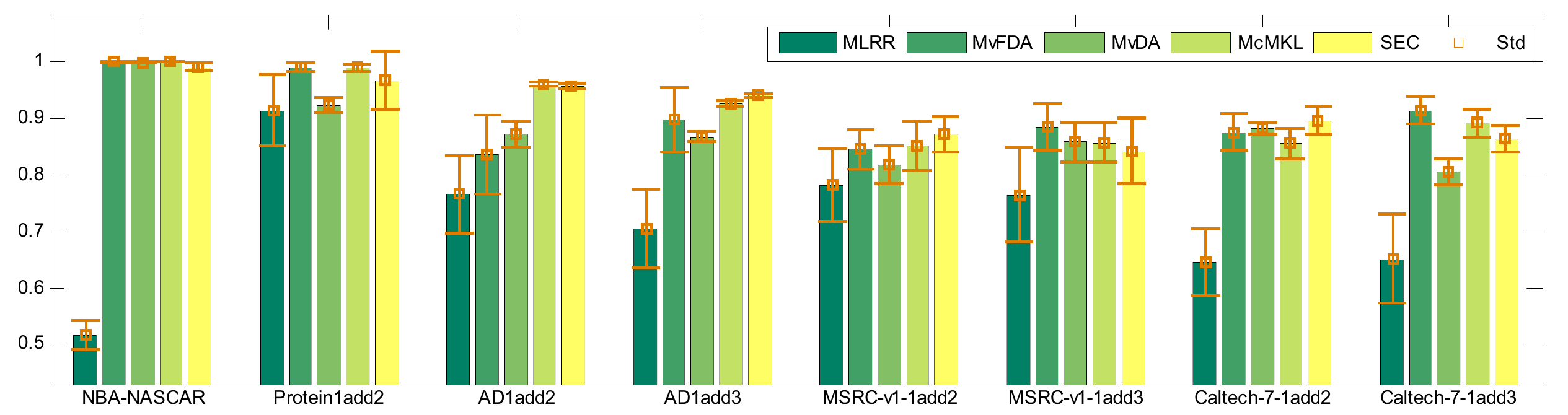}
\caption{Classification accuracies of different multi-view classification methods. Each group corresponds to the results on a data set. The standard deviations are also plotted. The results are 30\% training versus 70\% testing.}
\label{fig3}
\vskip -0.1in
\end{figure*}

\begin{figure*}[!t]
\centering
\includegraphics[width=0.90\textwidth]{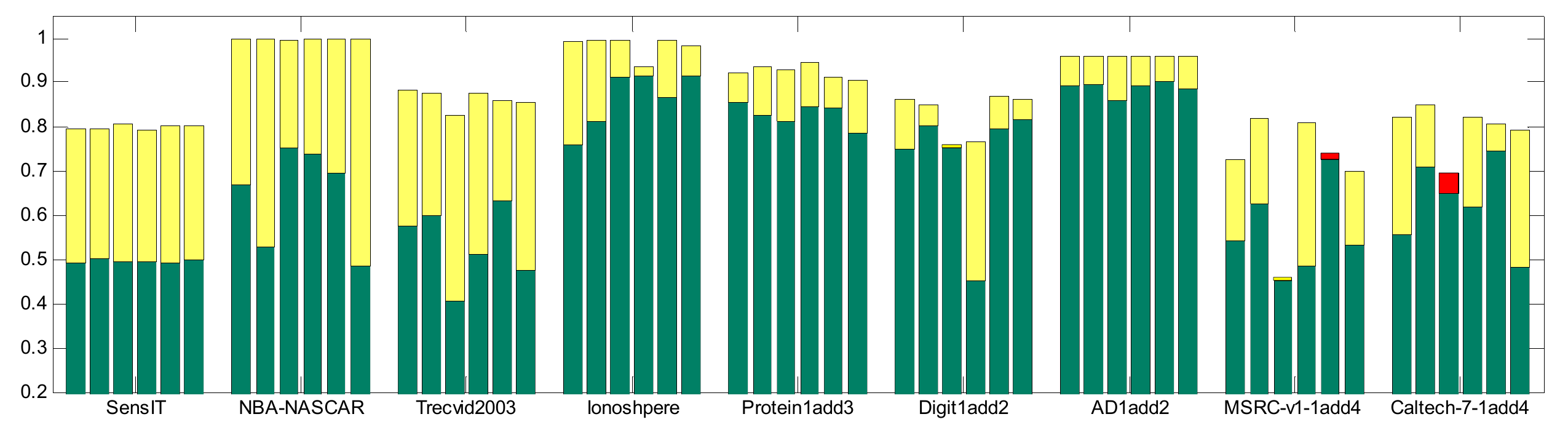}
\caption{The increase of classification accuracies on different data sets. Each group corresponds to the results on a data set. In each group, the original results are plotted by green face and the increasing values are plotted by yellow face if it is positive or red face if it is negative. In each group, from left to right, the methods are Regression, KNN, Naive Bayes, Boosting, Linear SVM and RBF kernel SVM.}
\label{fig4}
\vskip -0.1in
\end{figure*}

There are several observations from the result in Tables \ref{table_accuracy} and \ref{table_fscore}.

(1) The results in Tables \ref{table_accuracy} verify that the security could be guaranteed since SEC outperforms the best classification results of baseline classifiers. The $t$-test results also support this statement.

(2) Compared with the adapted version of six methods, SEC may perform better than all of them in some cases. For example, in the first line of Table \ref{table_accuracy}, SEC achieves the highest accuracies. In most cases, the accuracy of SEC is located between the lowest and highest accuracies of all the adaption methods. The reason may be that the constraints in optimization problem in Eq. (\ref{eq10}) are conducted on all $\bar{\mathbf{Y}}^{(k)}$ since we do not know which one is the best. It will handicap the computing of best $\mathbf{Y}$. In fact, as shown in Proposition \ref{proposition2}, we only need one $\bar{\mathbf{Y}}^{(k)}$ satisfying this constraint.

(3) Although some adaption methods perform better than SEC on some data sets, we cannot find one classifier which always performs better than SEC. This can be seen from the last line of $t$-test results. SEC always wins against the Best single-view results in all data sets whereas other methods may tie or loss on some data sets. This can be explained by the goal of our research, we try to design a secure classifier.

(4) Comparing the results in Tables \ref{table_accuracy} and \ref{table_fscore}, we know that although SEC is designed to guarantee the security in terms of accuracy, it is also secure in terms of another evaluation metric, i.e., F-scores. This may be explained by the fact that accuracy and F-score have some consistency in measuring the classification performance.

\subsection{Comparison With Multi-view Learning Methods}

Our method aims at design a classifier whose performance is never hurt when new features comes. The most direct way in manipulating data with multiple descriptions is using multi-view learning approaches. As illustrated in Table \ref{table_sigle_double_compare}, traditional multi-view learning methods do not always achieve higher accuracy with more features. We study this problem in a more thoroughly way.

We compare SEC with 4 representative multi-view classification approaches belonging to three different categories as mentioned in the Introduction section. McMKL \cite{icml/ZienO07} is the representative method of multi-kernel based approaches. MvFDA \cite{Diethe2008Multiview} and MvDA \cite{pami/KanSZLC16} are two representative subspace learning based methods. MLRR \cite{Zheng2015} is the representative method of regression based approaches. Due to the limitation of space, we report comparison results on 8 data sets with 30\% training vs 70\% testing. The mean and standard deviation of 50 independent runs are plotted in Fig. \ref{fig3} and there are several observations from Fig. \ref{fig3}.

(1) Compared with other multi-view classification approaches, SEC has achieved comparable performances. For example, on the data AD1add3, SEC has the highest accuracy, whereas on Protein1add2, McMKL achieves the best performance.

(2) Among all these methods, it seems that MLRR has the lowest classification accuracies. The reason may be that MLRR has a low rank assumption. The rank should be smaller than the number of classes. In our experiments, the number of class is small and it degrades the performances of MLRR.

(3) None of the multi-view classifiers performs the best consistently on these data sets. For example, MvFDA achieves the highest accuracies on three data sets and MvDA performs best on only one data set. The reason is that different approaches have different assumptions and suit for different data types.

\begin{table*}[!t]
\caption{In the column of $\bar{\mathbf{Y}}^{(k)}$, we denote the number of $\hat{\mathbf{Y}}^{(j)}$ which satisfies the constraint in Eq. (\ref{eq11}). According to Proposition \ref{proposition2}, one each data set, if we have such $\bar{\mathbf{Y}}^{(k)}$, whose corresponding number is 6, SEC is proved to be secure. The rest denotes the distances and accuracies of different methods. Note that, $\min_i \|\mathbf{Y}_i-\mathbf{Y}^{*}\|^2$ is proportional to Best and $\|\mathbf{Y} - \mathbf{Y}^{*} \|^2$ is proportional to SEC.}
\label{table_theorem_verify}
\centering
{
\begin{tabular}{c|| c c c c c c | c c | c c }
\hline
  Data          &  \scriptsize{$\bar{\mathbf{Y}}^{(1)}$}  & \scriptsize{$\bar{\mathbf{Y}}^{(2)}$} & \scriptsize{$\bar{\mathbf{Y}}^{(3)}$} & \scriptsize{$\bar{\mathbf{Y}}^{(4)}$} & \scriptsize{$\bar{\mathbf{Y}}^{(5)}$} & \scriptsize{$\bar{\mathbf{Y}}^{(6)}$} & \scriptsize{$\min_i \|\mathbf{Y}_i-\mathbf{Y}^{*}\|^2$} & \scriptsize{$\| \mathbf{Y} - \mathbf{Y}^{*} \|^2$} & Best & SEC \\
\hline
NBA-NASCAR      &    6    &   5  &  6  &  6  &  6  &  6  & 228.0000 & 2.0000   &  0.7286 & 0.9976  \\
Ionoshpere      &    6    &   6  &  6  &  6  &  6  &  6  & 20.0000  & 0.0000   &  0.9429 & 1.0000  \\
MSRC-v1-1add4   &    0    &   0  &  6  &  0  &  0  &  0  & 60.0000  & 26.2255  &  0.7143 & 0.8476  \\
MSRC-v1-2add4   &    0    &   0  &  5  &  1  &  1  &  1  & 8.0000   & 26.0674  &  0.9619 & 0.8667   \\
Caltech-7-1add4 &    0    &   1  &  6  &  0  &  0  &  0  & 114.0000 & 48.5492  &  0.7409 & 0.8636  \\
Caltech-7-2add4 &    1    &   0  &  4  &  0  &  0  &  0  & 38.0000  & 51.4603  &  0.9136 & 0.8591 \\
\hline
\end{tabular}}
\vskip -0.1in
\end{table*}

\subsection{The Effectiveness of Adaption}

In above experiments, we only show the best results of baseline classifiers. Since adaption is an important step to ensure security, we would like to compare the results before and after the adaption. With the training and testing percentages as 50\% vs 50\%, we report results on 9 data sets shown in Fig. \ref{fig4}. We use the stacked bar to show the results. In each group, from left to right, the methods, which are plotted by green bar, are Regression, KNN, Naive Bayes, Boosting, Linear SVM and RBF kernel SVM. If the adaption increases the accuracy, we add a yellow bar on the original one. On the contrary, if the adaption decreases the accuracy, we stack a red bar on the original one. The other settings are the same as that in Table \ref{table_accuracy} and we have the follows observations.

(1) The adaption improves the performance of original methods in most cases, no matter which type of classifier and which kind of data have been employed. Nevertheless, the adaption may also worsen the performances. For example, on MSRC-v1-1add4 the adaption decreases the accuracy of Linear SVM. The reason may be that the new coming features have distinct characters, compared with the on hand features. This phenomenon has also demonstrated that there is no strategy which can always improve the performances of original methods.

(2) Although we employ the same kind of adaption strategy on all methods, it takes different effects on different data sets. For example, on Trecvid2003, the improvement is significant whereas on Protein1add3, the improvement is limited. This may be caused by the reason that the new coming features of Trecvid2003 are suitable to this kind of adaption.

(3) Even on the same data set, the improvement of different methods varies. For example, on Digit1add2, the improvement of Naive Bayes is the largest. One possible reason is that Naive Bayes performs not so well on this data set and this kind of adaption will heavily enlarge the effects of new coming features.

\subsection{Demonstration of the Condition in Proposition 2}

Eq. (\ref{eq11}) in Proposition \ref{proposition2} is a sufficient condition to guarantee the security. We will verify this statement through numerical results. We conduct experiments on six data sets, shown in Table \ref{table_theorem_verify}, with the same setting as that in previous subsection. In the column $\bar{\mathbf{Y}}^{(k)}$ of Table \ref{table_theorem_verify}, we denote the number of $\hat{\mathbf{Y}}^{(j)}$, which satisfies the constraint in Eq. (\ref{eq11}). In the columns of $\min_i \|\mathbf{Y}_i-\mathbf{Y}^{*}\|^2$ and $\|\mathbf{Y} - \mathbf{Y}^{*} \|^2$, we record the distance. Finally, the last two columns consist of classification accuracies of Best and SEC.

As seen from the results in Table \ref{table_theorem_verify}, we have

(1) The results in Table \ref{table_theorem_verify} verifies the correctness of Proposition \ref{proposition2}. On the first three data sets, we have at least one $\bar{\mathbf{Y}}^{(k)}$, all the $\hat{\mathbf{Y}}^{(j)}$ satisfy the constraint in Eq. (\ref{eq11}). Thus, the distance $\|\mathbf{Y} - \mathbf{Y}^{*} \|^2$ is smaller than $\min_i \|\mathbf{Y}_i-\mathbf{Y}^{*}\|^2$ and the classifier accuracy of SEC is higher than Best.

(2) When there is no such $\bar{\mathbf{Y}}^{(k)}$, whose corresponding number is 6, our method cannot guarantee secure, this phenomenon can be seen from the results on MSRC-v1-2add4 and Caltech-7-2add4. These results also demonstrate that the adding of new feature is not always helpful.

(3) The security is feature type dependent. For example, on MSRC-v1, if the on hand feature type is 1, the adding of feature with type 4 is helpful. Nevertheless, if the original feature is type is 2, the adding of the same type of features degrades the performance.

\subsection{Diagnostic Classification of Schizophrenia}
\label{sec_application}

Schizophrenia is one of severe mental illnesses that impairs multiple cognitive domains as evidenced by delusions, hallucinations, disorganized speech and thought formation, social withdrawal, gross disorganization, and other negative symptoms \cite{APA2013}. The complex and heterogeneous symptoms pose a challenge to the objective diagnosis of schizophrenia based solely on clinical manifestations. Searching for reliable biomarkers for the diagnosis and treatment of schizophrenia is clearly an international imperative. Our previous works involved the use of MRI to discriminate schizophrenic patients from healthy controls, implying the potential of MRI in the diagnosis of schizophrenia \cite{Shen2009, Wang2015Evidence}.

In this section, we will tackle this problem from another perspective, i.e., security of using augmented features. There are two public data sets are employed for evaluation. The COBRE data\footnote{http://fcon\_1000.projects.nitrc.org/indi/retro/cobre.html} consists of raw 3D-T1 and resting-state blood oxygenation level-dependent (BOLD)-fMRI brain images from 71 patients with schizophrenia and 74 healthy controls. Diagnostic information was collected using the Structured Clinical Interview used for DSM Disorders (SCID). The UCLA data \cite{Poldrack2016A} consists of raw 3D-T1 and BOLD-fMRI data from 58 patients with schizophrenia and 132 healthy controls.

\begin{figure}[!t]
\centering
\includegraphics[width=0.45\textwidth]{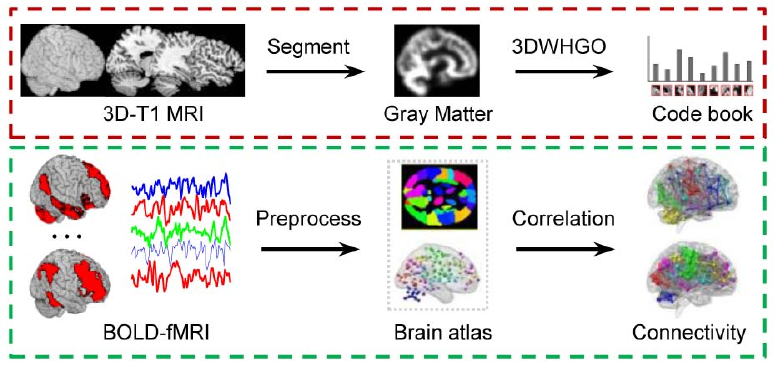}
\caption{The sample images of fMRI and the illustrations of our processing in extract the descriptions. The half top are illustrations of extraction from 3D-T1 MRI images and the bottom are extraction from BOLD fMRI.}
\label{fig5}
\vskip -0.1in
\end{figure}

\begin{table*}[!t]
\caption{Testing accuracies (mean$\pm$std) of the compared methods on fMRI data sets with different percent of training and testing examples. The other settings are the same as that in Table \ref{table_accuracy}.}
\label{table_frmi_acc}
\centering
\vskip -0.1in
{
\begin{tabular}{c||c|| c c c c c c c c }
\hline
COBRE& Per       &  Best   &  AdRegression   & AdKNN    & AdNaiveBayes    & AdBoosting &  AdSVM(Lin) & AdSVM(RBF) & SEC \\
\hline
1add2& 0.2 & .5418(.0453) & .6591(.0684)$\bullet$ & .6698(.0516)$\bullet$ & .5049(.0286)$\circ$ & .6315(.0778)$\bullet$ & .6621(.0581)$\bullet$ & .6819(.0520)$\bullet$ & .6250(.0916)$\bullet$ \\
&0.3 & .5668(.0535) & .7059(.0482)$\bullet$ & .7000(.0474)$\bullet$ & .5213(.0531)$\circ$ & .6817(.0461)$\bullet$ & .6941(.0426)$\bullet$ & .7005(.0414)$\bullet$ & .6911(.0803)$\bullet$ \\
&0.4 & .5690(.0413) & .7201(.0538)$\bullet$ & .6937(.0537)$\bullet$ & .5351(.0666){\tiny $\odot$} & .6856(.0472)$\bullet$ & .6960(.0488)$\bullet$ & .7006(.0351)$\bullet$ & .7017(.0931)$\bullet$ \\
&0.5 & .5826(.0497) & .7271(.0623)$\bullet$ & .6958(.0594)$\bullet$ & .5722(.0722){\tiny $\odot$} & .6972(.0777)$\bullet$ & .7049(.0529)$\bullet$ & .7257(.0527)$\bullet$ & .7118(.0674)$\bullet$ \\
&0.6 & .5940(.0717) & .7276(.0446)$\bullet$ & .7164(.0463)$\bullet$& .5853(.0960){\tiny $\odot$} & .7224(.0630)$\bullet$ & .7259(.0518)$\bullet$ & .7371(.0509)$\bullet$ & .7259(.0512)$\bullet$ \\
\hline
1add3& 0.2 & .5483(.0849) & .6069(.0883){\tiny $\odot$} & .6190(.0705){\tiny $\odot$} & .5088(.0244)$\circ$ & .5914(.0845){\tiny $\odot$} & .5966(.0904){\tiny $\odot$} & .6207(.0716){\tiny $\odot$} & .6034(.0917){\tiny $\odot$} \\
     & 0.3 & .5693(.0627) & .7079(.0361)$\bullet$ & .6970(.0370)$\bullet$ & .5248(.0625)$\circ$ & .6822(.0376)$\bullet$ & .6970(.0328)$\bullet$ & .7233(.0306)$\bullet$ & .6906(.0689)$\bullet$ \\
     & 0.4 & .6057(.0427) & .7345(.0389)$\bullet$ & .6989(.0573)$\bullet$ & .5632(.0568){\tiny $\odot$} & .7103(.0549)$\bullet$ & .6908(.0425)$\bullet$ & .7299(.0364)$\bullet$ & .7207(.0420)$\bullet$ \\
     & 0.5 & .5917(.0322) & .7389(.0268)$\bullet$ & .7000(.0354)$\bullet$ & .5528(.0515){\tiny $\odot$} & .7278(.0516)$\bullet$ & .7028(.0611)$\bullet$ & .7097(.0384)$\bullet$ & .7294(.0547)$\bullet$ \\
     & 0.6 & .5931(.0461) & .7431(.0611)$\bullet$ & .7241(.0430)$\bullet$ & .5759(.0973){\tiny $\odot$} & .7138(.0636)$\bullet$ & .7017(.0598)$\bullet$ & .7466(.0586)$\bullet$ & .7310(.0737)$\bullet$ \\
\hline
UCLA & Per       &  Best   &  AdRegression   & AdKNN    & AdNaiveBayes    & AdBoosting &  AdSVM(Lin) & AdSVM(RBF) & SEC \\
\hline
1add2& 0.2 &.7243(.0354) & .7500(.0526){\tiny $\odot$} & .7421(.0321){\tiny $\odot$} & .5862(.2031)$\circ$ & .7283(.0446){\tiny $\odot$} & .7224(.0496){\tiny $\odot$} & .7112(.0459){\tiny $\odot$} & .7531(.0482)$\bullet$ \\
     & 0.3 &.7338(.0490) & .7571(.0468){\tiny $\odot$} & .7564(.0362){\tiny $\odot$} & .6977(.0292){\tiny $\odot$} & .7429(.0340){\tiny $\odot$} & .6835(.0531)$\circ$ & .6827(.0673)$\circ$ & .7559(.0365){\tiny $\odot$} \\
     & 0.4 &.7491(.0351) & .7816(.0342){\tiny $\odot$} & .7667(.0323){\tiny $\odot$} & .7079(.0442)$\circ$ & .7737(.0532){\tiny $\odot$} & .7184(.0711){\tiny $\odot$} & .7044(.0662){\tiny $\odot$} & .7574(.0634){\tiny $\odot$} \\
     & 0.5 &.7305(.0365) & .7611(.0371){\tiny $\odot$} & .7379(.0235){\tiny $\odot$} & .6842(.0302)$\circ$ & .7579(.0337){\tiny $\odot$} & .6842(.0700){\tiny $\odot$} & .6589(.0793)$\circ$ & .7689(.0291)$\bullet$ \\
     & 0.6 &.7474(.0499) & .8000(.0492)$\bullet$  & .7618(.0418){\tiny $\odot$} & .7105(.0588){\tiny $\odot$} & .7855(.0468){\tiny $\odot$} & .6750(.0206)$\circ$ & .6750(.0537)$\circ$ & .7934(.0565)$\bullet$ \\
\hline
1add3 &0.2 &.7125(.0234) & .7145(.0321){\tiny $\odot$} & .7329(.0278){\tiny $\odot$} & .5191(.0894)$\circ$ & .7250(.0282){\tiny $\odot$} & .7033(.0332){\tiny $\odot$} & .6796(.0318)$\circ$ & .7476(.0321)$\bullet$ \\
      &0.3 &.7398(.0337) & .7692(.0444){\tiny $\odot$} & .7624(.0254){\tiny $\odot$} & .6683(.0853)$\circ$ & .7707(.0412){\tiny $\odot$} & .7120(.0450){\tiny $\odot$} & .7008(.0531){\tiny $\odot$} & .7519(.0462){\tiny $\odot$} \\
      &0.4 &.7237(.0370) & .7456(.0620){\tiny $\odot$} & .7518(.0424){\tiny $\odot$} & .6784(.0807)$\circ$  & .7368(.0510){\tiny $\odot$} & .6930(.0455){\tiny $\odot$} & .7088(.0376){\tiny $\odot$} & .7565(.0603){\tiny $\odot$} \\
      &0.5 &.7126(.0371) & .7484(.0728){\tiny $\odot$} & .7411(.0695){\tiny $\odot$} & .6726(.0316)$\circ$  & .7516(.0817){\tiny $\odot$} & .7116(.0581){\tiny $\odot$} & .6874(.0645){\tiny $\odot$} & .7600(.0679){\tiny $\odot$} \\
      &0.6 &.7776(.0300) & .8289(.0340)$\bullet$ & .7829(.0272){\tiny $\odot$} & .7263(.0438)$\circ$  & .7921(.0424){\tiny $\odot$} & .7316(.0729){\tiny $\odot$} & .7276(.0544)$\circ$  & .8182(.0422)$\bullet$ \\
\hline
\multicolumn{3}{c}{Win/Tie/Loss}&    11/9/0  & 9/11/0 & 0/8/12 & 9/11/0 & 9/9/2 & 9/6/5 & 14/6/0  \\
\hline
\end{tabular}}
\vskip -0.0in
\end{table*}

Before going into the details, we would like to give an intuitive illustration in Fig. \ref{fig5}. As seen from the top plane of Fig. \ref{fig5}, different from traditional images, brain MRI images are often three-dimensional. Thus, we employ a three-dimensional feature descriptor in our previous work \cite{Yuan2015Gender}, i.e., the three-dimensional weighted histogram of gradient orientation (3D WHGO) to describe this complex spatial structure of 3D-T1 brain images.  The descriptor combines local information for signal intensity and global three-dimensional spatial information for the whole brain. This is the on hand descriptions (feature 1) and the dimensionality is 9600. Besides, as shown in the bottom plane of Fig. \ref{fig5}, similar to our previous work \cite{brain/zeng}, we use BOLD-fMRI scans to extract region-to-region functional connectivity features with 116 regions of interest (ROI) from an automated anatomical labeling (AAL) brain atlas \cite{TZOURIOMAZOYER2002273} and 160 ROIs (modeled as 6-mm radius spheres) from several meta-analyses of fMRI activation studies \cite{Dosenbach2010Prediction}, respectively. They are the first (feature 2) and the second (feature 3) new coming features and the number of features are 6670 and 12720 respectively.

Similarly, we compare SEC with other algorithms as shown in Table \ref{table_accuracy}. The other settings are the same as that in the experiments in Section 4.2. With different percent of training data points, we summary the results in Table \ref{table_frmi_acc} and have the similar observations as that in Table \ref{table_accuracy}. Significantly, even when all the adaption methods perform better than Best slightly, SEC still guarantees security. It can be verified by the $t$-test results in the last line. Besides, compared with the results on COBRE, the results on UCLA indicate that the adaption plays a less important role. The adapted accuracies may be even worse. Nevertheless, in these serious cases, SEC also achieves satisfying results.

\section{Conclusion}
\label{sec_con}

In this paper, we study the problem of learning a secure classifier, and propose the SEC algorithm that does not need to access the on hand features if the classifiers are trained. Our approach is particularly useful in the applications with emerging new features, where robust learning performances are needed. In practice, our methods benefit SEC in two aspects: (1) In A-stage, we can adapt any types of classifiers by utilizing new coming features. It is quite necessary in practical applications; (2) In I-stage, we integrate the benefits of the adapted classifier from A-stage, which is also very useful for real applications. Extensive empirical studies show that the SEC approach is very effective to solve the secure classifier learning problem. In this paper, we only focus on the adding of one type of new coming features. How to extend it into multiple types of new coming features is an interesting future work. A possible way may be that we can accept them by adaption and integration one by one. Besides, how to accelerate the optimization speed is also worth studying. Several modern optimization tools should be borrowed for alleviating computational burden.

\bibliographystyle{IEEEtran}
\bibliography{sec}

\end{document}